%% file: camera.tex
\documentclass[10pt]{article} % For LaTeX2e
\usepackage[accepted]{tmlr}
\input{math_commands.tex}

\usepackage{hyperref}
\usepackage{url}
\usepackage{booktabs}
\usepackage{graphicx}
\usepackage{multirow}
\usepackage{amsthm}
\usepackage{amssymb}
\usepackage{subfig}

\title{KOALA: Koopman Operator Learning for WiFi-Based \\ Anticipatory Human Motion Prediction}

\author{\name Quang-Anh N. D. \email chotala1113@gmail.com \\
      \addr London Digital Twin Research Centre\\
      Middesex University
      \AND
      \name Duc Pham Minh \email pmduc2808@gmail.com \\
      \addr University of Arkansas
      \AND
      \name Thao Phuong Pham \email pptnvt2004@gmail.com\\
      \addr Agora Intelligence Lab \\
      \AND
      \name Minh Anh Nguyen \email manhmanhh9@gmail.com\\
      \addr Agora Intelligence Lab \\
      \AND
      \name Huan X. Nguyen \email h.nguyen@mdx.ac.uk\\
      \addr London Digital Twin Research Centre\\
      Middesex University
      \AND
      \name Tuan Dang \email tuand@uark.edu\\
      \addr University of Arkansas \\}

\theoremstyle{definition}

\theoremstyle{plain}
\newtheorem{theorem}{Theorem}

\newtheorem{proposition}{Proposition}

\theoremstyle{remark}
\newtheorem{remark}{Remark}

\def\month{08}  % Insert correct month for camera-ready version
\def\year{2026} % Insert correct year for camera-ready version
\def\openreview{\url{https://openreview.net/forum?id=JQA0EfQIfj}} % Insert correct link to OpenReview for camera-ready version

\begin{document}

\maketitle

\begin{abstract}
WiFi Channel State Information (CSI) has emerged as a privacy-preserving alternative to cameras for human pose estimation. However, existing approaches treat pose inference as an instantaneous regression problem and do not model temporal dynamics, making future motion prediction infeasible. Naively applying vision-based prediction methods compounds the estimation noise already present in CSI-derived poses, as autoregressive rollouts amplify errors at every step. We propose KOALA, the framework for human motion prediction directly from WiFi CSI, by lifting noisy CSI-derived pose sequences into a learned Koopman latent space where nonlinear dynamics become linear, enabling multi-horizon prediction via simple matrix-vector products without autoregressive iteration or error accumulation. A residual CSI-conditioned operator resolves the identity attractor problem inherent from Koopman formulations, and an anchor-delta prediction head eliminates the degenerate shortcut of copying the current pose across all horizons. To regularise the lifting and operator jointly, we introduce a Koopman Anchored Latent (KAL) loss that operates in the temporal-encoder feature space, enforcing dynamical consistency across prediction horizons without requiring contrastive, spectral, or auxiliary losses. Experiments on MM-Fi and WiPose show that KOALA achieves robust, consistent performance across both short- and long-term prediction horizons, outperforming all baselines by a substantial margin.
\end{abstract}

\section{Introduction}

\begin{figure}[!b]
    \centering
    \includegraphics[width=\linewidth]{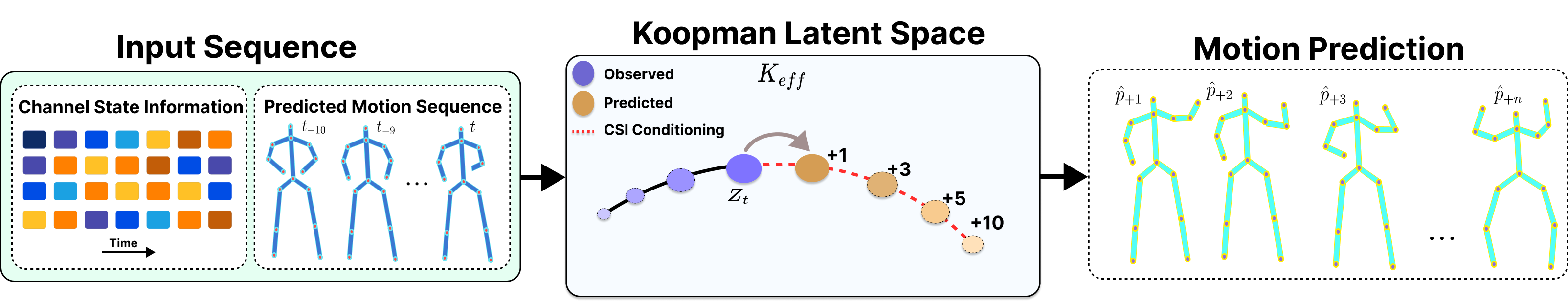}
    \caption{KOALA predicts future human motion from WiFi CSI and observed skeleton poses, these are lifted into a Koopman latent space, where a single CSI-conditioned linear operator propagates the state to multiple future horizons without autoregressive iteration.}
    \label{fig:overview}
\end{figure}

Anticipating human motion before it fully unfolds is a fundamental capability for intelligent systems operating in human-centric environments, enabling proactive assistance in smart homes, collision avoidance in human-robot collaboration, and early intervention in healthcare monitoring \citep{liu2022survey3d}. The dominant paradigm for this task relies on skeletal sequences derived from optical motion capture or RGB-D sensors \citep{martinez2017human,mao2019learning,mao2020,xu2023auxformer,guo2023simlpe}. Despite remarkable progress, camera-based systems impose significant deployment constraints: they require line-of-sight, degrade under occlusion and low illumination, and raise pervasive privacy concerns in domestic and clinical settings. These barriers motivate a fundamentally different sensing modality, which is ubiquitous, infrastructure-free, and inherently privacy-preserving.

WiFi Channel State Information (CSI) has emerged as a compelling alternative for non-intrusive human sensing \citep{zhou2023}. By capturing how the human body perturbs multipath radio-frequency propagation, CSI encodes rich motion signatures without requiring any wearable device or visual access to the monitored space \citep{9889024, 10.1145/2789168.2790093, YANG2023100703}. A growing line of work has demonstrated the viability of recovering instantaneous 3D skeletal poses from CSI signals, leveraging cross-modal supervision from synchronized cameras to bridge the modality gap \citep{chen2023, Yang2022MetaFiDP}. Subsequent advances have pushed toward 2D and 3D reconstruction \citep{9889024, zhou2023}, multi-person scenarios \citep{yan2024}, and cross-environment generalization \citep{zhou2024}. These systems, however, share a critical architectural assumption, they operate as instantaneous regressors, mapping a fixed CSI observation window to the current pose state. Modeling forward temporal dynamics, predicting where a person will be rather than where they are, remains largely unexplored. Despite its promise, forecasting human motion from WiFi CSI is fundamentally more challenging than its vision-based counterpart. First, CSI-derived pose observations are inherently noisy and incomplete due to multipath fading, environmental interference, and hardware variability, leading to spatial jitter and missing joints. Second, unlike clean motion capture sequences, the temporal dynamics extracted from CSI are less stable and harder to model. Third, multi-horizon prediction requires maintaining temporal consistency over long sequences, where autoregressive strategies suffer from error accumulation and exposure bias. These challenges call for new modeling paradigms that are robust to noise while capturing long-term motion dependencies.

To address these gaps, we propose \textbf{K}oopman \textbf{O}perator with \textbf{A}ttentive \textbf{L}ifting \textbf{A}rchitecture (\textbf{KOALA}), the first unified framework for WiFi-CSI-based human motion prediction (see Fig.~\ref{fig:overview}). Moving beyond conventional approaches that focus on instantaneous pose estimation or action recognition~\citep{10.1007/978-3-031-72946-1_5,zhou2024}, KOALA forecasts future human motion directly from ambient Radio Frequency (RF) signals by lifting noisy CSI-derived pose sequences into a learned Koopman latent space where nonlinear dynamics become linear. Multi-horizon prediction is then performed via simple matrix-vector products, without autoregressive iteration or error accumulation. To enforce dynamical consistency and representational richness, KOALA is trained with a Koopman Anchored Latent (KAL) loss that operates in the temporal-encoder feature space, shaping the lifting by minimizing prediction errors at the timescales relevant to the task. Our contributions are summarized as follows.

\begin{itemize}
    \item We introduce a novel framework for multi-horizon human pose prediction directly from WiFi CSI, establishing a new problem at the intersection of RF sensing and anticipatory motion understanding.

    \item We propose a residual CSI-conditioned Koopman operator that resolves the identity attractor problem inherent in prior Koopman-based methods, paired with an anchor-delta prediction formulation that eliminates the degenerate shortcut of copying the current pose across all horizons.

    \item We introduce a Koopman Anchored Latent loss that subsumes reconstruction, linearity, and stability objectives into a single anchored feature-space prediction term, removing the need for separate contrastive, spectral, and auxiliary losses used in prior
    work.

    \item We conduct extensive experiments on MM-Fi~\citep{yang2023mmfi} and WiPose~\citep{9889024}, demonstrating that KOALA consistently outperforms existing baselines across both short- and long-term prediction horizons.
\end{itemize}

\section{Related Work}
\subsection{WiFi CSI-Based Human Pose Estimation}
Early works established cross-modal supervision as the dominant paradigm for WiFi-based human pose estimation, using synchronized cameras as teacher signals to map CSI amplitude and phase to 2D joint coordinates \citep{chen2023, Yang2022MetaFiDP}. Subsequent research extended this to full 3D reconstruction, WiPose \citep{jiang2020} combines residual CNNs with recurrent models for spatiotemporal feature extraction, while MetaFi++ \citep{zhou2023} employs Transformer encoders for richer motion representations. Person-in-WiFi 3D \citep{yan2024} further scales to multi-person scenarios via set prediction, and AdaPose \citep{zhou2024} addresses cross-environment generalization through domain-invariant adaptation. Despite these advances, all existing systems work as instantaneous regressors, mapping a fixed CSI window to the current pose without modeling forward temporal dynamics. Predicting future skeletal states from WiFi sequences remains entirely unaddressed.

\subsection{Skeleton-Based Motion Prediction}
Predicting future poses from skeleton sequences is a core computer vision task with broad applications~\citep{liu2022survey3d}. Early recurrent methods, including Residual-GRU~\citep{martinez2017human} and Structural-RNN~\citep{jain2016structural}, suffered from error accumulation and mean-pose collapse at longer horizons. Feed-forward approaches addressed this by predicting all frames simultaneously by LTD~\citep{mao2019learning} and HRI~\citep{mao2020} worked in the frequency domain with motion attention, while MST-GNN~\citep{li2021multiscale} and MSR-GCN~\citep{dang2021msr} improved spatial modeling via multi-scale graph convolutions. More recently, AuxFormer~\citep{xu2023auxformer} used auxiliary tasks to strengthen Transformer representations, and siMLPe~\citep{guo2023simlpe} showed that a simple MLP with DCT preprocessing can outperform complex architectures. Despite these, all existing methods operate on clean skeleton observations. To our knowledge, no prior work addresses motion prediction from WiFi CSI, a fundamentally harder setting that requires forecasting future poses from a noisy sensing modality. KOALA is the first to tackle this problem.

\section{Methodology} \label{sec:methodology}
\subsection{Preliminaries} \label{sec:preliminaries}
\paragraph{Channel State Information.}
Modern WiFi standards, such as IEEE 802.11, employ Orthogonal Frequency-Division Multiplexing (OFDM) to transmit data over multiple subcarriers simultaneously~\citep{7920364}. The received signal on each subcarrier $s$ between transmit antenna $a_t$ and receive antenna $a_r$ is characterized by a complex-valued channel coefficient:
\begin{equation}
  H(s,a_t,a_r) \;=\; |H(s,a_t,a_r)|\,e^{\,j\angle H(s,a_t,a_r)},
  \label{eq:csi_coeff}
\end{equation}
where $|H|$ captures signal attenuation and $\angle H$ encodes the phase shift from multipath propagation. When a person moves, body segments reflect and scatter the signal along multiple paths, each contributing an attenuated and phase-shifted copy:
\begin{equation}
  H(s) \;=\; \sum_{l=1}^{L} \alpha_l\,e^{-j2\pi d_l/\lambda_s},
  \label{eq:multipath}
\end{equation}
where $\alpha_l$ and $d_l$ are the attenuation and propagation distance of path $l$, and $\lambda_s$ is the wavelength of subcarrier $s$. As the body moves, path lengths $d_l$ change over time, inducing time-varying fluctuations in both $|H(s,a_t,a_r)|$ and $\angle H(s,a_t,a_r)$ across all subcarrier-antenna pairs that encode the body's spatial configuration. Stacking these coefficients over $T$ time steps, $S$ subcarriers, and $A$ antenna pairs yields the full CSI tensor $\mathbf{X}\in\mathbb{R}^{T\times S\times A}$, where each entry corresponds to $H(s,a_t,a_r)$ at a given time step. In KOALA, $\mathbf{X}$ serves both as direct contextual input for motion dynamics and as the sole sensory source for estimating observed poses, enabling fully end-to-end operation from raw WiFi signals.

\paragraph{Koopman Operator Theory.} % what is "system"
Human motion is inherently nonlinear, but Koopman theory~\citep{koopman1931hamiltonian} provides a principled linearisation. Let $\mathbf{v}_t \in \mathcal{V}$ denote the body state at time $t$, representing the configuration of the human body in observation space. For a nonlinear body motion model $\mathbf{v}_{t+1}=F(\mathbf{v}_t)$ with transition $F$, the Koopman operator $\mathcal{K}$ acts on observables $g:\mathcal{V}\to\mathbb{R}^d$ via:
\begin{equation}
  (\mathcal{K}g)(\mathbf{v}) \;=\; g\!\bigl(F(\mathbf{v})\bigr),
  \quad \forall\, \mathbf{v} \in \mathcal{V}.
  \label{eq:koopman_def}
\end{equation}
Although $F$ is nonlinear, $\mathcal{K}$ is always linear~\citep{koopman1931hamiltonian}. For vector-valued observables $\boldsymbol{g}=[g_1,\ldots,g_N]^\top$, the lifted state evolves as $\boldsymbol{g}(\mathbf{v}_{t+1})=\mathbf{K}\,\boldsymbol{g}(\mathbf{v}_t)$ with $\mathbf{K}\in\mathbb{R}^{N\times N}$, and multi-step prediction at horizon $h$ is $\boldsymbol{g}(\mathbf{v}_{t+h})=\mathbf{K}^h\boldsymbol{g}(\mathbf{v}_t)$, requiring no autoregressive iteration. The main challenge is learning a lifting $\boldsymbol{g}$ where the finite-dimensional approximation remains accurate~\citep{brunton2019data,lusch2018deep}. In KOALA, $N$ corresponds to the Koopman latent dimension $D_z$ and $\boldsymbol{g}$ is the composition of a skeleton-aware pose encoder, a dual-stream fusion module, and a lifting network, while $\mathbf{K}$ is CSI-conditioned in residual form, allowing the WiFi signal to modulate dynamics without the pose model encoding identity information. This suits the CSI setting: CSI-derived poses carry multipath-induced jitter (\eqref{eq:multipath}), which observation-space iteration re-ingests at every step, whereas a single linear latent map avoids feeding noisy predictions back as inputs. Fresnel-zone traversal further induces smooth channel variation over short windows, a regime reasonably modelled by a near-identity operator.

% In the context of WiFi-CSI-based motion forecasting, the Koopman framework is particularly suitable for three reasons: (i) the lifting $\phi$ maps noisy CSI-derived poses into a latent space where the signal-to-noise ratio is amplified via overcomplete representation ($D_z > d$); (ii) the linear operator $\mathbf{K}_\mathrm{eff}$ enables multi-horizon prediction without autoregressive error accumulation; and (iii) the residual CSI-conditioned parametrization allows the WiFi signal to modulate dynamics while keeping the base dynamics $\mathbf{B}$ environment-independent, enabling adaptation to different CSI conditions.

% \paragraph{Predictive Coding.} Predictive coding (PC) posits that the brain minimises hierarchical prediction errors between top-down expectations and bottom-up sensory evidence~\citep{rao1999predictive}, with learning driven by the total energy $\mathcal{F}=\sum_\ell\|\mathbf{r}^{(\ell)}-f_\ell(\mathbf{r}^{(\ell+1)})\|^2_{\boldsymbol{\Sigma}_\ell^{-1}}$. We adopt this principle as a \fix{KAL} loss, the Koopman operator predicts future states in the temporal-encoder feature space, and the lifting is refined by minimising the resulting prediction errors. Operating in this internal feature space rather than in raw observation space decouples the \fix{KAL} signal from CSI noise while maintaining gradient pressure through the lifting and operator at the timescales relevant to the prediction task.

\subsection{Problem Formulation} \label{sec:problem} % viết lại cho rõ - sửa lại phần H ở phần (1)
We predict future human poses from WiFi CSI alone at inference. Let $\{\mathbf{x}_t\}_{t=1}^{T}$ with $\mathbf{x}_t\in\mathbb{R}^{D_c}$ denote $T$ CSI frames, where $D_c = S \times A$ is the product of the number of subcarriers $S$ and antenna pairs $A$. Here $J$ denotes the number of body joints and $\mathcal{H}=\{1,3,5,10,15,20\}$ the set of prediction horizons in frames. Our goal is to predict poses $\hat{\mathbf{p}}_{T+h}\in\mathbb{R}^{J\times 3}$ at each horizon $h\in\mathcal{H}$:
\begin{equation}
  \hat{\mathbf{p}}_{T+h} \;=\; f_\theta(\{\mathbf{x}_t\}_{t=1}^T),
  \quad h\in\mathcal{H}.
  \label{eq:problem}
\end{equation}
During training, ground-truth observed poses $\mathbf{P}_{1:T}=\{\mathbf{p}_t\}_{t=1}^{T}$ and future poses $\mathbf{P}_{T+1:T+\zeta}$ where $\zeta=\max(\mathcal{H})$ are available; at inference, only CSI is required. This task is challenging on three fronts: 1) CSI-to-pose mapping is noisy and environment-dependent due to multipath fading and subject-specific body geometry; 2) multi-horizon prediction requires temporal consistency, and autoregressive methods accumulate errors at long horizons; and 3) pose estimation errors propagate into the dynamics module, creating a cascaded-error problem absent from prior work that assumes clean pose input. We therefore seek a unified model that estimates observed poses from CSI, models their temporal dynamics via a Koopman operator with a residual parametrization that prevents identity collapse, and predicts across all horizons without error accumulation.

\begin{figure}
    \centering
    \includegraphics[width=0.9\linewidth]{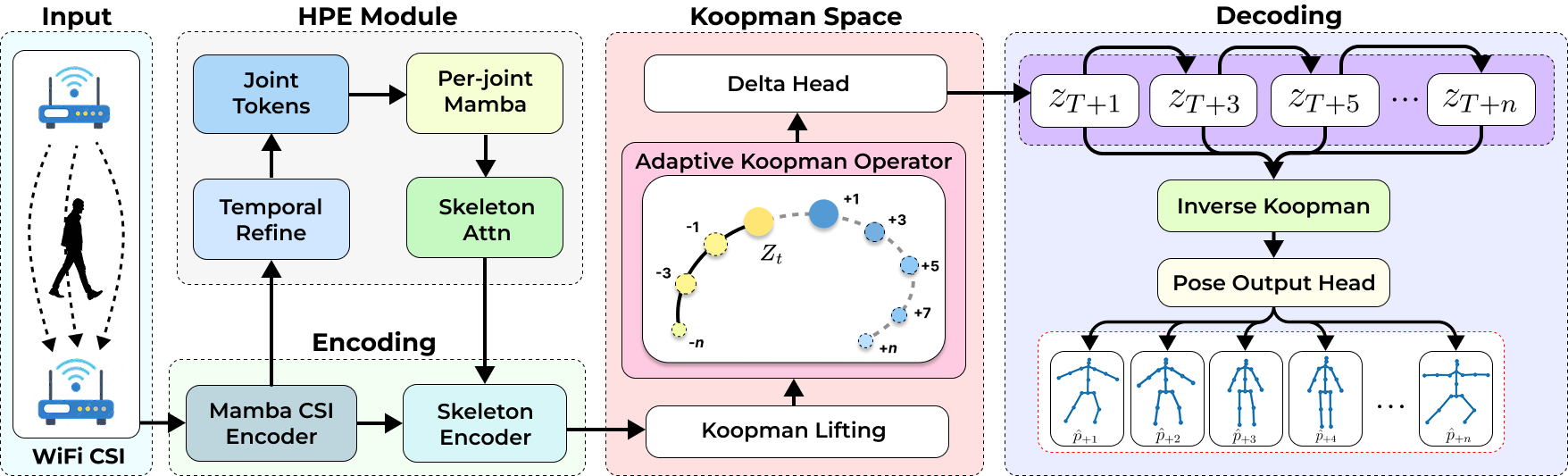}
    \caption{Overview of the KOALA framework. WiFi CSI is encoded by a Mamba CSI encoder whose features are passed to the HPE module and skeleton encoder. The HPE module estimates poses via temporal refinement, joint token construction, per-joint Mamba, and skeleton-biased attention; estimated poses are gradient-detached before entering the skeleton encoder. Dual-stream fusion and a temporal encoder generate contextual motion features, which are lifted into the Koopman latent space. A residual CSI-conditioned operator propagates latent states across prediction horizons without autoregression. Inverse lifting and a delta head decode pose deltas, which are added to an anchor pose for final predictions.
    }
    \label{fig:architec}
\end{figure}

\subsection{Approach Overview}  \label{sec:overview}
As illustrated in Fig \ref{fig:architec}, KOALA operates in four sequential stages. \emph{(i)}~\textbf{CSI Encoding}: a Mamba-based encoder~\citep{gu2023mamba} extracts per-frame features $\{\mathbf{h}_t\}_{t=1}^T$ and a global context vector $\mathbf{c}$ via learned attention pooling. \emph{(ii)}~\textbf{Integrated Pose Estimation and Encoding}: a Mamba-based HPE module recovers estimated 3D poses from CSI features; a skeleton-aware encoder then converts those poses into compact structured features for the dynamics module. \emph{(iii)}~\textbf{Koopman Dynamics}: a dual-stream fusion combines CSI and pose features; a lifting network maps the result into a higher-dimensional Koopman latent space; a residual CSI-conditioned operator propagates the state across all prediction horizons in a single rollout from the last observed latent state. \emph{(iv)}~\textbf{Anchor-Based Decoding}: the inverse lifting recovers per-horizon feature vectors from which an output head produces pose \emph{deltas} relative to an anchor pose, and the final prediction is the sum of anchor and delta.

\subsection{CSI Encoding} \label{sec:csi_encoder}
The CSI encoder processes $\mathbf{X}\in\mathbb{R}^{T\times D_c}$ into both per-frame features and a global context vector. Each frame is first projected by a two-layer MLP with GELU activation, then processed by $L_c$ stacked Mamba blocks with residual connections and layer normalization:
\begin{align}
  \mathbf{h}_t^{(0)}
    &= \mathrm{MLP}_\mathrm{proj}(\mathbf{x}_t),
  \label{eq:csi_proj}\\
  \mathbf{h}_t^{(\ell)}
    &= \mathrm{LN}\!\bigl(
        \mathrm{Mamba}^{(\ell)}(\mathbf{h}_{1:T}^{(\ell-1)})_t
        + \mathbf{h}_t^{(\ell-1)}
       \bigr).
  \label{eq:csi_mamba}
\end{align}
Mamba selective state space mechanism provides linear-time processing suited to the long-range correlations in CSI as the body moves through Fresnel zones. The per-frame representations $\mathbf{h}_{1:T}^{(L_c)}$ are shared downstream by both the HPE module and the dual-stream fusion. A learned attention pooling yields the global context vector:
\begin{equation}
  \mathbf{c}
  = \sum_t \alpha_t\,\mathbf{h}_t^{(L_c)},
  \qquad
  \alpha_t
  = \mathrm{softmax}_t\!\bigl(\mathbf{w}^\top\mathbf{h}_t^{(L_c)}\bigr),
  \label{eq:csi_pool}
\end{equation}
where $\mathbf{w}\in\mathbb{R}^d$ is learnable. The context $\mathbf{c}$ exclusively conditions the Koopman operator, implementing the inductive bias that CSI modulates \emph{how} motion evolves rather than \emph{what} the pose is, while avoiding quadratic cost over long sequences.

\subsection{CSI Pose Estimation and Encoding} \label{sec:pose_est}
Raw CSI features are mapped to structured pose representations by two stages: a joint-conditioned Mamba HPE module that recovers 3D joint coordinates, followed by a skeleton-aware encoder that maps coordinates into kinematically structured features for the dynamics module. Both stages share the same learnable anatomical type embeddings $\{\mathbf{e}_j^\mathrm{type}\}$ and skeleton adjacency bias $\mathbf{G}$, ensuring consistent joint identity and kinematic connectivity throughout. In both stages, the prediction loss is blocked from backpropagating into the HPE module via gradient detachment, isolating pose localization from the prediction objective.

\paragraph{Stage 1: Joint-Conditioned HPE Module.}
The HPE module converts per-frame CSI features $\{\mathbf{h}_t^{(L_c)}\}_{t=1}^T$ into estimated 3D poses $\{\tilde{\mathbf{p}}_t\}_{t=1}^T$, where $\tilde{\mathbf{p}}_t\in\mathbb{R}^{J\times 3}$. It is supervised during training by ground-truth observed poses $\{\mathbf{p}_t\}_{t=1}^T$ and used at inference as the sole source of pose information.

The module proceeds through four steps.

\emph{Step 1: Temporal refinement.}
A stack of $L_p$ Mamba blocks with residual connections and layer normalization refines the temporal context of the CSI features:
\begin{equation}
  \mathbf{h}_t^{\mathrm{hpe}}
  \;=\;
  \mathrm{LN}\!\bigl(
    \mathrm{Mamba}_\mathrm{hpe}(\mathbf{h}_{1:T}^{(L_c)})_t
    + \mathbf{h}_t^{(L_c)}
  \bigr).
  \label{eq:hpe_temporal}
\end{equation}

\emph{Step 2: Joint token construction.}
Each frame feature is projected and expanded into $J$ joint-specific tokens by adding a learnable anatomical embedding $\mathbf{e}_j^\mathrm{type}\in\mathbb{R}^d$, which encodes the anatomical identity of joint $j$:
\begin{equation}
  \mathbf{m}_{t,j}
  \;=\;
  \mathrm{MLP}_\mathrm{expand}\!\bigl(\mathbf{h}_t^{\mathrm{hpe}}\bigr)
  + \mathbf{e}_j^\mathrm{type},
  \quad j=1,\ldots,J.
  \label{eq:joint_token}
\end{equation}
These embeddings $\{\mathbf{e}_j^\mathrm{type}\}_{j=1}^J$ are shared with Stage~2, ensuring both stages reason about joints using the same anatomical vocabulary.

\emph{Step 3: Per-joint temporal modeling.}
Each joint's temporal sequence $\{\mathbf{m}_{t,j}\}_{t=1}^T$ is
processed independently by a shared-weight Mamba block:
\begin{equation}
  \mathbf{m}_{t,j}^{\mathrm{out}}
  \;=\;
  \mathrm{LN}\!\bigl(
    \mathrm{Mamba}_\mathrm{joint}(\mathbf{m}_{1:T,j})_t
    + \mathbf{m}_{t,j}
  \bigr).
  \label{eq:joint_temporal}
\end{equation}
Sharing weights across joints encourages learning of universal motion priors such as periodicity and velocity. The anatomical embeddings $\mathbf{e}_j^\mathrm{type}$ retain joint-specific identity, so the model effectively uses the same temporal dynamics model for all joints but predicts their coordinates separately based on anatomical position.

\emph{Step 4: Spatial self-attention and coordinate regression.}
Within each frame, the $J$ joint tokens attend to one another via skeleton-biased multi-head self-attention. Let $\mathbf{M}_t=[\mathbf{m}_{t,1}^{\mathrm{out}},\ldots, \mathbf{m}_{t,J}^{\mathrm{out}}]^\top\in\mathbb{R}^{J\times d}$ collect the $J$ joint tokens for frame $t$. The attention is modulated by a skeleton graph adjacency bias $\mathbf{G}\in\mathbb{R}^{J\times J}$ built from the human kinematic chain:
\begin{equation}
  G_{ij}
  \;=\;
  \begin{cases}
    0      & \text{if joints } i \text{ and } j \text{ are adjacent in the skeleton,}\\
    -\beta & \text{otherwise,}
  \end{cases}
  \label{eq:adj_bias}
\end{equation}
where $\beta>0$ is a fixed suppression constant. The self-attention with multi-head projection matrices $\mathbf{W}_Q^{\mathrm{hpe}}, \mathbf{W}_K^{\mathrm{hpe}}, \mathbf{W}_V^{\mathrm{hpe}}\in\mathbb{R}^{d\times d_h}$ where $d_h$ is the per-head dimension, is then:
\begin{equation}
  \mathbf{M}_t^\mathrm{attn}
  \;=\;
  \mathrm{softmax}\!\Bigl(
    \frac{(\mathbf{M}_t\mathbf{W}_Q^{\mathrm{hpe}})
          (\mathbf{M}_t\mathbf{W}_K^{\mathrm{hpe}})^\top}{\sqrt{d_h}}
    + \mathbf{G}
  \Bigr)
  \mathbf{M}_t\mathbf{W}_V^{\mathrm{hpe}}.
  \label{eq:hpe_attn}
\end{equation}
After a position-wise feed-forward sublayer and layer normalization, a per-joint two-layer MLP with GELU activation regresses the 3D coordinate of each joint:
\begin{equation}
  \tilde{\mathbf{p}}_{t,j}
  \;=\;
  \mathrm{MLP}_\mathrm{coord}([\mathbf{M}_t^\mathrm{attn}]_j)
  \;\in\mathbb{R}^3,
  \quad j=1,\ldots,J,
  \label{eq:hpe_coord}
\end{equation}
where $[\,\cdot\,]_j$ denotes the $j$-th row.
The full estimated pose is
$\tilde{\mathbf{p}}_t
=[\tilde{\mathbf{p}}_{t,1},\ldots,\tilde{\mathbf{p}}_{t,J}]
\in\mathbb{R}^{J\times 3}$.

\paragraph{Stage 2: Skeleton-Aware Pose Encoding.} \label{par:pose_encoder}
The skeleton encoder maps a pose sequence into a compact feature per frame that encodes kinematic relationships across joints, for consumption by the Koopman dynamics module. The estimated poses $\tilde{\mathbf{p}}_t$ are detached from the computation graph before entering this stage. This gradient barrier isolates the HPE module (trained solely on $\mathcal{L}_\mathrm{est}$) from the prediction objective, preventing the prediction loss from adapting HPE outputs in ways that help downstream prediction at the expense of geometric pose accuracy. The skeleton encoder input follows a scheduled sampling protocol detailed in Appendix~\ref{app:scheduled_sampling}.

% The encoder processes a mixture of ground-truth and estimated poses controlled by the mixing ratio $\rho$ defined in Section~\ref{sec:impl}. At inference, it exclusively consumes HPE outputs.

\emph{Spatial encoding.}
Each joint coordinate $\tilde{\mathbf{p}}_{t,j}\in\mathbb{R}^3$ (or its ground-truth counterpart during teacher forcing) is embedded into $\mathbb{R}^d$ and combined with the shared anatomical type embedding:
\begin{equation}
  \mathbf{e}_{t,j}
  \;=\;
  \mathrm{MLP}_\mathrm{embed}(\tilde{\mathbf{p}}_{t,j})
  + \mathbf{e}_j^\mathrm{type},
  \quad j=1,\ldots,J.
  \label{eq:joint_embed}
\end{equation}
Let $\mathbf{E}_t=[\mathbf{e}_{t,1},\ldots,\mathbf{e}_{t,J}]^\top
\in\mathbb{R}^{J\times d}$, inter-joint dependencies are modelled via multi-head self-attention with the same adjacency bias $\mathbf{G}$ (\eqref{eq:adj_bias}), using the skeleton encoder's own projection matrices $\mathbf{W}_Q^{\mathrm{skel}},\mathbf{W}_K^{\mathrm{skel}}, \mathbf{W}_V^{\mathrm{skel}}\in\mathbb{R}^{d\times d_h}$, distinct from those of the HPE module:
\begin{equation}
  \mathbf{E}_t^\mathrm{attn}
  \;=\;
  \mathrm{softmax}\!\Bigl(
    \frac{(\mathbf{E}_t\mathbf{W}_Q^{\mathrm{skel}})
          (\mathbf{E}_t\mathbf{W}_K^{\mathrm{skel}})^\top}{\sqrt{d_h}}
    + \mathbf{G}
  \Bigr)
  \mathbf{E}_t\mathbf{W}_V^{\mathrm{skel}}.
  \label{eq:skel_attn}
\end{equation}
After a position-wise feed-forward sublayer, the $J$ attended joint embeddings are concatenated and projected to a single per-frame representation:
\begin{equation}
  \mathbf{f}_t^\mathrm{pose}
  \;=\;
  \mathrm{LN}\!\bigl(
    \mathbf{W}_\mathrm{pool}
    [[\mathbf{E}_t^\mathrm{attn}]_1;\ldots;[\mathbf{E}_t^\mathrm{attn}]_J]
    + \mathbf{b}_\mathrm{pool}
  \bigr)
  \;\in\mathbb{R}^d,
  \label{eq:joint_pool}
\end{equation}
where $[;]$ denotes concatenation, $\mathbf{W}_\mathrm{pool}\in\mathbb{R}^{d\times Jd}$, and $\mathbf{b}_\mathrm{pool}\in\mathbb{R}^d$.

\subsection{Dual-Stream Fusion} \label{sec:fusion}
To ensure the Koopman operator receives informative input even when pose estimates are imperfect, we combine pose-derived features $\mathbf{f}_t^\mathrm{pose}$ with CSI per-frame features $\mathbf{h}_t^{(L_c)}$ through additive residual fusion:
\begin{equation}
  \mathbf{f}_t
  = \mathrm{LN}\!\Bigl(
      \mathbf{W}_c\,\mathbf{h}_t^{(L_c)}
      + \mathbf{W}_u\,\mathbf{f}_t^\mathrm{pose}
      + \mathrm{MLP}_\mathrm{fuse}\!\bigl(
          \mathbf{W}_c\,\mathbf{h}_t^{(L_c)}
          + \mathbf{W}_u\,\mathbf{f}_t^\mathrm{pose}
        \bigr)
    \Bigr).
  \label{eq:fusion}
\end{equation}
Additive fusion is intentional; it guarantees that the dynamics module receives CSI information directly, so that inaccurate pose estimates cannot completely dominate the representation. The contribution balance between the two streams is monitored during training via the ratio $\|\mathbf{W}_u\mathbf{f}_t^\mathrm{pose}\|_2\,/\, \|\mathbf{W}_c\mathbf{h}_t^{(L_c)}\|_2$, providing a diagnostic of whether the skeleton stream contributes meaningfully throughout training. The fused sequence $\{\mathbf{f}_t\}_{t=1}^T$ is passed through $L_t$ Mamba blocks with residual connections, yielding temporally contextualised features $\{\tilde{\mathbf{f}}_t\}_{t=1}^T$ that capture motion dynamics for the subsequent Koopman lifting.

\subsection{Koopman Lifting and Dynamics} \label{sec:koopman}

\paragraph{Lifting Network.}
The lifting $\phi:\mathbb{R}^d\to\mathbb{R}^{D_z}$ maps temporally contextualized fused features into a higher-dimensional Koopman space where linear dynamics hold. Its inverse $\phi^{-1}:\mathbb{R}^{D_z}\to\mathbb{R}^d$ recovers feature-space representations. Both are three-layer MLPs with GELU, dropout, and layer normalization:
\begin{align}
  \mathbf{z}_t
    &= \phi(\tilde{\mathbf{f}}_t)
     = \mathrm{LN}\!\bigl(\mathrm{MLP}_\mathrm{enc}(\tilde{\mathbf{f}}_t)\bigr)
     \in\mathbb{R}^{D_z},
  \label{eq:lift}\\
  \hat{\tilde{\mathbf{f}}}_t
    &= \phi^{-1}(\mathbf{z}_t)
     = \mathrm{LN}\!\bigl(\mathrm{MLP}_\mathrm{dec}(\mathbf{z}_t)\bigr)
     \in\mathbb{R}^d.
  \label{eq:unlift}
\end{align}
Setting $D_z>d$ is grounded in Koopman theory: the true operator is infinite-dimensional~\citep{koopman1931hamiltonian}, and any finite-dimensional approximation in $\mathbb{R}^{D_z}$ incurs a projection error that decreases as $D_z$ grows~\citep{brunton2019data,strasser2024error}. We set $D_z=2d$, providing additional degrees of freedom for the learned eigenfunctions to disentangle nonlinear modes that are coupled in the original feature coordinates.

\begin{remark}[Lifting information preservation] \label{prop:lifting_info}
If the reconstruction error $\|\phi^{-1}(\phi(\tilde{\mathbf{f}}_t))
- \tilde{\mathbf{f}}_t\|^2 \leq \epsilon_r$ for all training features, then $\phi$ is approximately injective on the training manifold: for any two training features $\tilde{\mathbf{f}}_a,\tilde{\mathbf{f}}_b$,
\begin{equation}
  \|\tilde{\mathbf{f}}_a - \tilde{\mathbf{f}}_b\|
  \;\leq\;
  \|\phi^{-1}(\phi(\tilde{\mathbf{f}}_a))
   - \phi^{-1}(\phi(\tilde{\mathbf{f}}_b))\|
  + 2\sqrt{\epsilon_r}.
  \label{eq:lifting_info}
\end{equation}
In particular, if two feature vectors are separated by more than $2\sqrt{\epsilon_r}$, their lifted representations must be distinct (proof in Appendix~\ref{profprof}).
\end{remark}

\paragraph{Residual CSI-Conditioned Operator.}
A fundamental difficulty in learning Koopman operators with gradient descent is the \emph{identity attractor}. When a dense matrix $\mathbf{K}$ is initialized near identity, the prediction loss for slowly-varying motion is already low, so gradients provide little pressure to push $\mathbf{K}$ away from $\mathbf{I}$. Meanwhile, reconstruction and linearity losses actively penalize deviations from $\mathbf{K}\approx\mathbf{I}$, thereby keeping the operator uninformative throughout training.

KOALA resolves this by parametrizing the operator in \textbf{residual form}:
\begin{equation}
  \mathbf{K}_\mathrm{eff}
  \;=\;
  \mathbf{I} + \mathbf{B}
  + \gamma\,\mathbf{U}(\mathbf{c})\,\mathbf{V}(\mathbf{c})^\top,
  \label{eq:keff}
\end{equation}
where $\mathbf{B}\in\mathbb{R}^{D_z\times D_z}$ is the learned residual dynamics matrix (we use $\mathbf{B}$ to distinguish it from the skeleton adjacency bias $\mathbf{G}$ of Section~\ref{sec:pose_est}), $\mathbf{U}(\mathbf{c}),\mathbf{V}(\mathbf{c})\in\mathbb{R}^{D_z\times r}$ are low-rank CSI adaptation factors produced by two-layer MLPs from the global context $\mathbf{c}$, and $\gamma>0$ is a learnable scalar parametrised in log-space as $\gamma=\exp(\xi)$. The identity is baked into the architecture rather than enforced during initialization, so $\mathbf{B}$ faces no competing pressure to remain near zero. The log-space parametrization of $\gamma$ prevents the CSI adaptation from collapsing to zero as gradients weaken, keeping the WiFi conditioning active throughout training.

The rank constraint $r\ll D_z$ ensures CSI modulates a low-dimensional subspace of the dynamics, capturing environment-dependent multipath effects without dominating the base residual dynamics. The signal model motivates this split: in \eqref{eq:multipath} the path lengths $d_l$ are set by room geometry and subject position, so changing the environment perturbs the channel along a few geometric degrees of freedom while the underlying motion regularities are shared. The base operator $\mathbf{I}+\mathbf{B}$ captures the shared dynamics, and the low-rank term $\gamma\,\mathbf{U}(\mathbf{c})\mathbf{V}(\mathbf{c})^\top$ absorbs the environment-specific modulation. The low-rank factors $\mathbf{U}(\mathbf{c})$ and $\mathbf{V}(\mathbf{c})$ are computed once per sample and reused across all rollout steps. The one-step update is:
\begin{equation}
  \mathbf{z}_{t+1}
  \;=\;
  \mathbf{z}_t
  + \mathbf{B}\,\mathbf{z}_t
  + \gamma\,\mathbf{U}(\mathbf{c})
    \bigl(\mathbf{V}(\mathbf{c})^\top\mathbf{z}_t\bigr).
  \label{eq:onestep}
\end{equation}

\begin{remark}[Operator stability] \label{prop:stability}
Let $\mathbf{K}_\mathrm{eff}$ be defined as in \eqref{eq:keff} with $\|\mathbf{B}\|_F\leq\eta_B$. Then $\|\mathbf{K}_\mathrm{eff}\|_2 \leq 1+\eta_B+\gamma\|\mathbf{U}\|_2\|\mathbf{V}\|_2$, and for $h$-step rollouts $\|\mathbf{K}_\mathrm{eff}^h\|_2 \leq(1+\eta_B+\gamma\|\mathbf{U}\|_2\|\mathbf{V}\|_2)^h$. With $\eta_B$ small at initialization and $\gamma\approx 0.1$, the amplification factor at $h=20$ remains bounded during early training, preventing runaway gradient accumulation through long operator chains. The Frobenius norm $\|\mathbf{B}\|_F$ is monitored as a training diagnostic to track operator growth.
\end{remark}

\paragraph{Anchor-Based Multi-Horizon Prediction.}
To predict at horizons $\mathcal{H}$, KOALA iterates the operator from the last lifted state $\mathbf{z}_T$:
\begin{equation}
  \mathbf{z}_{T+k} \;=\; \mathbf{K}_\mathrm{eff}\,\mathbf{z}_{T+k-1},
\quad k=1,\ldots,\zeta.
  \label{eq:multistep}
\end{equation}
Rather than decoding absolute poses, KOALA predicts pose \emph{deltas} relative to an anchor pose $\bar{\mathbf{p}}$, the last observed pose during teacher forcing or the last estimated pose at inference. Denoting the delta head
$\Delta_\theta=\mathrm{MLP}_\mathrm{out}\circ\phi^{-1}$:
\begin{equation}
  \hat{\mathbf{p}}_{T+h}
  \;=\;
  \bar{\mathbf{p}} + \Delta_\theta(\mathbf{z}_{T+h}),
  \quad h\in\mathcal{H}.
  \label{eq:decode}
\end{equation}
This formulation eliminates a common degenerate solution: without the anchor, outputting the current pose at every horizon incurs nearly zero loss for slow motions. Under the anchor formulation, $\Delta_\theta(\mathbf{z}_{T+h})\approx\mathbf{0}$ for all $h$ incurs proportional loss at long horizons where the true pose has drifted from the anchor, so the operator is forced to encode genuine temporal dynamics. The encoding pipeline executes once per sample; each additional prediction horizon costs only one matrix-vector product.

The following theorem formalizes why linear rollout in the Koopman space produces bounded multi-horizon predictions, while the same rollout in observation space does not.

\begin{theorem}[Bounded multi-horizon prediction] \label{thm:bounded}
Let $\mathbf{v}_{t+1} = F(\mathbf{v}_t)$ be the nonlinear dynamics governing human motion in observation space. Suppose the lifting $\phi$ satisfies the approximate Koopman invariance condition
\begin{equation*}
  \|\phi(F(\mathbf{v}))-\mathbf{K}_\mathrm{eff}\phi(\mathbf{v})\|
  \;\le\;\varepsilon
  \quad\text{for all observed states } \mathbf{v},
\end{equation*}
and that $\kappa:=\|\mathbf{K}_\mathrm{eff}\|_2\le 1+\eta$ with $\eta\ll 1$ (an empirical condition monitored via $\|\mathbf{B}\|_F$ during training; see Remark~\ref{prop:stability}). Let $L$ be the Lipschitz constant of the delta head $\Delta_\theta=\mathrm{MLP}_\mathrm{out}\circ\phi^{-1}$. Then the $h$-step prediction error in observation space satisfies
\begin{equation}
  \|\hat{\mathbf{p}}_{T+h}-\mathbf{p}_{T+h}\|
  \;\le\;
  L\,\frac{\kappa^h-1}{\kappa-1}\,\varepsilon
  \;+\; L\,\delta_h
  \;+\; e_{\mathrm{dec},h}
  \;+\; \|\bar{\mathbf{p}}-\mathbf{p}_T\|,
  \label{eq:bound}
\end{equation}
where $z^*{T+h} := \phi(F^h(v_T))$ is the ideal Koopman embedding of the true future state ($v_T \in \mathcal{V}$ as in Section~3.1), and $\delta_h := |\phi(\tilde{f}{T+h}) - \phi(F^h(v_T))|$ measures how well the temporal-encoder feature at frame $T+h$ approximates this ideal embedding. The term $e_{\mathrm{dec},h} := \|\mathbf{p}_T + \Delta_\theta(\mathbf{z}^*_{T+h}) - \mathbf{p}_{T+h}\|$ is the delta-head decoder approximation error: since $\Delta_\theta$ is a learned MLP rather than an exact inverse map, it does not perfectly reconstruct the true pose even when given the ideal future latent state $\mathbf{z}^*_{T+h}$ directly, and $e_{\mathrm{dec},h}$ captures this residual, distinct from the invariance defects $\varepsilon$ and $\delta_h$ that arise upstream of the decoder. and $\|\bar{\mathbf{p}}-\mathbf{p}_T\|$ is the anchor alignment error (zero under teacher forcing). When $\kappa\le 1+\eta$ with small $\eta$, the geometric factor satisfies $(\kappa^h-1)/(\kappa-1)\le h(1+\eta h/2)$, yielding near-linear growth:
\begin{equation}
  \|\hat{\mathbf{p}}_{T+h}-\mathbf{p}_{T+h}\|
  \;\lesssim\;
  L\,h\,\varepsilon\,(1+\eta h/2)
  \;+\; L\,\delta_h
  \;+\; e_{\mathrm{dec},h}
  \;+\; \|\bar{\mathbf{p}}-\mathbf{p}_T\|.
  \label{eq:linear_growth}
\end{equation}
In contrast, an autoregressive predictor iterating a decoder in observation space with per-step error $\varepsilon_\mathrm{ar}$ incurs
\begin{equation}
  \|\hat{\mathbf{p}}^\mathrm{ar}_{T+h}-\mathbf{p}_{T+h}\|
  \;\ge\;
  h\,\varepsilon_\mathrm{ar} - O(h^2\varepsilon_\mathrm{ar}^2).
  \label{eq:ar_bound}
\end{equation}
The key asymmetry is that $\varepsilon$ is measured in a latent space explicitly shaped to minimize Koopman invariance defect via the KAL objective (\eqref{eq:KAL_loss}), whereas $\varepsilon_\mathrm{ar}$ accumulates in the raw pose observation space where no such regularisation applies. Consequently $\varepsilon\ll\varepsilon_\mathrm{ar}$, and the bound tightens as $\mathcal{L}_\mathrm{KAL}$ decreases during training. (Proof and supporting remarks in Appendix~\ref{theoprof}.)
\end{theorem}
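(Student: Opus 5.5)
The plan is to split the prediction error through the ideal latent $\mathbf{z}^*_{T+h}=\phi(F^h(\mathbf{v}_T))$ and then bound each piece separately. Since $\hat{\mathbf{p}}_{T+h}=\bar{\mathbf{p}}+\Delta_\theta(\mathbf{z}_{T+h})$, add and subtract $\mathbf{p}_T+\Delta_\theta(\mathbf{z}^*_{T+h})$ and apply the triangle inequality:
\begin{equation*}
\|\hat{\mathbf{p}}_{T+h}-\mathbf{p}_{T+h}\|\le \|\bar{\mathbf{p}}-\mathbf{p}_T\| + \|\Delta_\theta(\mathbf{z}_{T+h})-\Delta_\theta(\mathbf{z}^*_{T+h})\| + e_{\mathrm{dec},h}.
\end{equation*}
The first and third terms already match the statement. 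By the Lipschitz property of $\Delta_\theta$, the middle term is at most $L\|\mathbf{z}_{T+h}-\mathbf{z}^*_{T+h}\|$. What remains is to bound this latent discrepancy by $\frac{\kappa^h-1}{\kappa-1}\varepsilon+\delta_h$.

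For the latent discrepancy, write $\mathbf{z}_{T+h}=\mathbf{K}_\mathrm{eff}^h\mathbf{z}_T$ from \eqref{eq:multistep}, and use the standard telescoping identity
\begin{equation*}
\mathbf{K}_\mathrm{eff}^h\phi(\mathbf{v}_T)-\phi(F^h(\mathbf{v}_T))=\sum_{k=0}^{h-1}\mathbf{K}_\mathrm{eff}^{h-1-k}\bigl(\mathbf{K}_\mathrm{eff}\phi(F^k(\mathbf{v}_T))-\phi(F^{k+1}(\mathbf{v}_T))\bigr).
\end{equation*}
Each bracket is at most $\varepsilon$ by the invariance hypothesis. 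This requires the iterates $F^k(\mathbf{v}_T)$ to be ``observed states,'' which holds on trajectories in the data. Submultiplicativity gives $\|\mathbf{K}_\mathrm{eff}^{j}\|_2\le\kappa^{j}$, and summing the geometric series yields $\frac{\kappa^h-1}{\kappa-1}\varepsilon$.

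The main obstacle is bridging $\mathbf{z}_T=\phi(\tilde{\mathbf{f}}_T)$, the lifting of temporal-encoder features, and $\phi(\mathbf{v}_T)$, the lifting of the true state. The statement resolves this only through $\delta_h$, which is defined at frame $T+h$. I would treat $\phi$ of the true state as identified with the feature lift, so that $\mathbf{z}_T=\phi(\mathbf{v}_T)$ up to the same defect. Then I would absorb every mismatch between the computed latent and the ideal embedding into $\delta_h$, by inserting $\phi(\tilde{\mathbf{f}}_{T+h})$ through one more triangle inequality. Making this identification explicit as a standing assumption, and stating it clearly, is where the argument needs care. Otherwise a stray $\kappa^h\|\phi(\tilde{\mathbf{f}}_T)-\phi(\mathbf{v}_T)\|$ term appears.

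The near-linear bound \eqref{eq:linear_growth} follows from the estimate
\begin{equation*}
\frac{\kappa^h-1}{\kappa-1}=\sum_{j<h}\kappa^j\le\sum_{j<h}(1+\eta)^j.
\end{equation*}
Using $(1+\eta)^j\le e^{\eta j}\approx1+\eta j$ to first order in $\eta h$, the sum is at most $h+\eta h(h-1)/2\le h(1+\eta h/2)$. This step is where the $\lesssim$ in the statement enters.

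For the autoregressive lower bound \eqref{eq:ar_bound}, model the per-step errors as aligned (worst case), $\hat{\mathbf{p}}^\mathrm{ar}_{t+1}=\tilde F(\hat{\mathbf{p}}^\mathrm{ar}_t)$ with $\|\tilde F-F\|\approx\varepsilon_\mathrm{ar}$. Under a local contraction or expansion factor $1-O(\varepsilon_\mathrm{ar})$ of $F$ along the trajectory, the accumulated error is at least $\sum_{k<h}(1-c\varepsilon_\mathrm{ar})^k\varepsilon_\mathrm{ar}\ge h\varepsilon_\mathrm{ar}-O(h^2\varepsilon_\mathrm{ar}^2)$. I would present this as holding for an adversarial or consistent-bias error model, not for all predictors.
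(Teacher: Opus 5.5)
Your pose-space step is sound, and in one respect cleaner than the paper's. Adding and subtracting $\mathbf{p}_T+\Delta_\theta(\mathbf{z}^*_{T+h})$ produces $\|\bar{\mathbf{p}}-\mathbf{p}_T\|$ and $e_{\mathrm{dec},h}$ exactly as the statement defines them. The paper's proof instead defines $e_{\mathrm{dec},h}$ with $\bar{\mathbf{p}}$, cancels the anchor, and then appends the anchor term separately. Treating the $\lesssim$ and the autoregressive bound as first-order heuristics is also appropriate.

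\textbf{The gap is in the latent step.} You telescope along the ideal trajectory $\phi(F^k(\mathbf{v}_T))$, which bounds $\|\mathbf{K}_\mathrm{eff}^h\phi(\mathbf{v}_T)-\mathbf{z}^*_{T+h}\|$. The rollout, however, starts from $\mathbf{z}_T=\phi(\tilde{\mathbf{f}}_T)$. The start-point mismatch therefore enters as $\mathbf{K}_\mathrm{eff}^h\bigl(\phi(\tilde{\mathbf{f}}_T)-\phi(\mathbf{v}_T)\bigr)$, of size up to $\kappa^h\|\phi(\tilde{\mathbf{f}}_T)-\phi(\mathbf{v}_T)\|$. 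It cannot be absorbed into $\delta_h$, because $\delta_h$ depends only on the feature at frame $T+h$ and gives no control over a mismatch at frame $T$. Inserting $\phi(\tilde{\mathbf{f}}_{T+h})$ via a triangle inequality just moves the problem to bounding $\|\mathbf{K}_\mathrm{eff}^h\phi(\tilde{\mathbf{f}}_T)-\phi(\tilde{\mathbf{f}}_{T+h})\|$. Your ideal-trajectory telescoping says nothing about that quantity. Your other option, assuming $\phi(\tilde{\mathbf{f}}_T)=\phi(\mathbf{v}_T)$ exactly, is a hypothesis the statement does not make. Under it the $L\delta_h$ term is never used, which signals that this route does not match the structure of the bound.

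\textbf{The fix} is to telescope along the lifted feature trajectory, which is what the paper does. Write $\mathbf{y}_k:=\phi(\tilde{\mathbf{f}}_{T+k})$, so $\mathbf{y}_0=\mathbf{z}_T$ and $\|\mathbf{y}_h-\mathbf{z}^*_{T+h}\|=\delta_h$, and decompose
\[
\mathbf{K}_\mathrm{eff}^h\mathbf{z}_T-\mathbf{z}^*_{T+h}
=\sum_{k=0}^{h-1}\mathbf{K}_\mathrm{eff}^{h-1-k}\bigl(\mathbf{K}_\mathrm{eff}\mathbf{y}_{k}-\mathbf{y}_{k+1}\bigr)
+\bigl(\mathbf{y}_{h}-\mathbf{z}^*_{T+h}\bigr).
\]
Read the invariance hypothesis as a one-step defect $\|\mathbf{K}_\mathrm{eff}\mathbf{y}_k-\mathbf{y}_{k+1}\|\le\varepsilon$ along the observed features. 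This reading is natural: $\phi$ is defined on $\mathbb{R}^d$ features, so writing $\phi(\mathbf{v})$ already presupposes identifying states with features.

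With this decomposition the sum starts at the actual $\mathbf{z}_T$, so no $\kappa^h$-amplified start-point term appears. The comparison with the ideal embedding happens only at the endpoint and contributes exactly $\delta_h$. This gives $\|\mathbf{K}_\mathrm{eff}^h\mathbf{z}_T-\mathbf{z}^*_{T+h}\|\le\frac{\kappa^h-1}{\kappa-1}\varepsilon+\delta_h$, and the rest of your argument goes through unchanged.
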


In practice, the lifting $\phi$ and operator $\mathbf{K}_\mathrm{eff}$ are learned jointly from finite data, and the Koopman invariance condition $\phi(F(\mathbf{v}))=\mathbf{K}_\mathrm{eff}\phi(\mathbf{v})$ holds only approximately, we denote this residual invariance defect by $\varepsilon$ and report it empirically for the trained model in Appendix~\ref{app:koopman_diagnostics}, rather than assuming it as an architectural guarantee.

\subsection{Loss Design} \label{sec:training}
Our training objective comprises three terms that jointly shape the Koopman latent space, supervise pose estimation, and enforce multi-horizon consistency.

\paragraph{Prediction Loss.}
The prediction loss penalizes pose-space error at each horizon with weights $w_h$ that emphasize long horizons:
\begin{equation}
  \mathcal{L}_\mathrm{pred}
  \;=\;
  \frac{1}{\sum_h w_h}
  \sum_{h\in\mathcal{H}}
  w_h\,\bigl\|\hat{\mathbf{p}}_{T+h}-\mathbf{p}_{T+h}\bigr\|_2^2.
  \label{eq:pred_loss}
\end{equation}
Training simultaneously across all horizons enforces operator consistency: since multi-step predictions compose the same operator $\mathbf{K}_\mathrm{eff}$, gradients from every horizon collectively constrain a single set of parameters.

\paragraph{Koopman Anchored Latent Loss.}
The KAL loss regularises the lifting and operator by requiring that operator rollouts match the temporal-encoder features the model would produce upon observing future frames. Operating in this feature space $\mathbb{R}^d$ rather than in raw observation or Koopman latent space decouples the supervision signal from CSI noise while maintaining gradient pressure through $\phi$ and $\mathbf{K}_\mathrm{eff}$.

\noindent\textit{Future feature targets.}
Let $\mathbf{f}_{T+h}^*\in\mathbb{R}^d$ denote the temporal-encoder feature produced by processing the concatenated observation-and-future pose sequence through the fusion and temporal encoder.
Under a no-gradient context, these targets are computed by a single extended forward pass:
\begin{equation}
\bigl[
  \tilde{\mathbf{f}}_1,\ldots,\tilde{\mathbf{f}}_T,
  \mathbf{f}_{T+1}^*,\ldots,\mathbf{f}_{T+\zeta}^*
\bigr]
\;=\;
\mathrm{TemporalEnc}\!\Bigl(
  \mathrm{Fusion}\bigl(
    \mathbf{h}_{1:T+\zeta}^{\mathrm{pad}},\,
    \mathbf{p}_{1:T+\zeta}^{\mathrm{ext}}
  \bigr)
\Bigr),
  \label{eq:feat_targets}
\end{equation}
where $\mathbf{h}_{1:T+\zeta}^{\mathrm{pad}}$ pads the last CSI memory frame for future positions (no future CSI is available), and $\mathbf{p}_{1:T+\zeta}^{\mathrm{ext}}=[\mathbf{p}_{1:T}, \mathbf{p}_{T+1:T+\zeta}]$ uses ground-truth future poses. The extended forward pass is run under \texttt{torch.no\_grad()}, so the targets carry no gradient. Also, ground-truth future poses $p_{T+1:T+\zeta}$ serve as supervision labels in the same sense as $\mathcal{L}\mathrm{pred}$, future CSI is never accessed, and the observation window $p{1:T}$ follows the same scheduled sampling protocol as the main forward pass (Appendix~\ref{app:scheduled_sampling}). We deliberately pad future CSI with the last observed frame \(\mathbf{h}_T\) to match the information available at inference time, where no future CSI is accessible. The ground-truth future pose provides the dynamical evolution signal for supervision, while the static CSI maintains the environmental context. This design prevents the auxiliary loss from exploiting privileged future environmental information, ensuring consistency between training and inference conditions.

\noindent\textit{Anchored KAL formulation.} Let $\mathbf{z}_T=\phi(\tilde{\mathbf{f}}_T)$ be the last lifted observation state, $\mathbf{r}_T=\phi^{-1}(\mathbf{z}_T)$ the no-rollout reconstruction, and $\tilde{\mathbf{f}}_{T+h}^{\mathrm{pred}} =\phi^{-1}(\mathbf{K}_\mathrm{eff}^h\,\mathbf{z}_T)$ the feature predicted by rolling out the operator $h$ steps. KAL measures how much the operator-predicted feature \emph{changes} from the current reconstruction and compares this against the actual feature change:
\begin{equation}
  \mathbf{f}_{T+h}^{\mathrm{KAL}}
  \;=\;
  \tilde{\mathbf{f}}_T
  + \bigl(
      \tilde{\mathbf{f}}_{T+h}^{\mathrm{pred}}
      - \mathbf{r}_T
    \bigr).
  \label{eq:KAL_anchor}
\end{equation}
The KAL loss combines a $k{=}0$ reconstruction term with
horizon-weighted $k{>}0$ prediction terms:
\begin{equation}
  \mathcal{L}_\mathrm{KAL}
  \;=\;
  \frac{1}{W}
  \Biggl[
    \|\mathbf{r}_T - \tilde{\mathbf{f}}_T\|_2^2
    \;+\;
    \sum_{h\in\mathcal{H}}
    w_h^{\mathrm{KAL}}\,
    \bigl\|
      \mathbf{f}_{T+h}^{\mathrm{KAL}} - \mathbf{f}_{T+h}^*
    \bigr\|_2^2
  \Biggr],
  \label{eq:KAL_loss}
\end{equation}
where $w_h^{\mathrm{KAL}}=1/(1{+}h)$ and $W=1+\sum_h w_h^{\mathrm{KAL}}$ is a normalising constant. The horizon weighting is intentionally \emph{inverted} relative to $\mathcal{L}_\mathrm{pred}$: short horizons dominate the KAL gradient signal. This is critical for stability, as the $h$-step operator chain $\mathbf{K}_\mathrm{eff}^h$ amplifies gradient norms geometrically in $h$, so up-weighting long horizons in the KAL loss would produce unbounded gradients at initialization. Instead, $\mathcal{L}_\mathrm{pred}$ handles long-horizon supervision while KAL provides stable gradient flow through the lifting and operator at all timescales.

The anchoring also eliminates the large-magnitude baseline at initialization. At initialisation $\mathbf{K}_\mathrm{eff}\approx\mathbf{I}$, so $\tilde{\mathbf{f}}_{T+h}^{\mathrm{pred}}\approx\mathbf{r}_T$ and $\mathbf{f}_{T+h}^{\mathrm{KAL}}\approx\tilde{\mathbf{f}}_T$. The $k{>}0$ loss is then bounded by $\|\mathbf{f}_{T+h}^*-\tilde{\mathbf{f}}_T\|^2$, which is small because adjacent features differ little, preventing the gradient explosion through 20-step operator chains that occurs with an unanchored formulation.

The following proposition formalises this stability role of $\mathcal{L}_\mathrm{KAL}$: removing it entirely admits a degenerate joint collapse of the lifting and reconstruction networks that leaves $\|\mathbf{B}\|_F$ unconstrained by the training objective, admitting unbounded operator growth (proof in Appendix~\ref{app:KAL_proof}).

\begin{proposition}[Necessity of $\mathcal{L}_\mathrm{KAL}$] \label{prop:KAL_necessary}
Suppose $\mathcal{L}_\mathrm{KAL}$ is removed entirely. Then $\mathcal{L}_\mathrm{pred} + \lambda_e\mathcal{L}_\mathrm{est}$ admits a degenerate joint minimiser in which $\phi$ and $\phi^{-1}$ co-adapt so that $\phi$ maps all features to an $\varepsilon$-neighbourhood of the origin while $\phi^{-1}$ compensates by scaling up, leaving $\hat{\mathbf{p}}_{T+h}$ unchanged for any $\varepsilon > 0$. Under this collapse $\|\mathbf{B}\|_F$ is unconstrained, and the gradient of $\mathcal{L}_\mathrm{pred}$ with respect to $\mathbf{B}$ satisfies
\begin{equation}
  \left\|\nabla_{\mathbf{B}}\,\mathcal{L}^{(h)}_\mathrm{pred}\right\|_F
  \;\leq\;
  2\,L_\Delta \cdot h \cdot \|\mathbf{K}_\mathrm{eff}\|_2^{h-1}
  \cdot \|\mathbf{z}_T\| \cdot \|\hat{\mathbf{p}}_{T+h} - \mathbf{p}_{T+h}\|,
  \label{eq:grad_explosion}
\end{equation}
where $L_\Delta$ is the Lipschitz constant of the delta head $\Delta_\theta = \mathrm{MLP}_\mathrm{out} \circ \phi^{-1}$. Since $\|\mathbf{K}_\mathrm{eff}\|_2 \to \infty$ as $\|\mathbf{B}\|_F \to \infty$, this upper bound grows as $\mathcal{O}(\|\mathbf{K}_\mathrm{eff}\|_2^{h-1})$, and no term in the remaining objective constrains $\|\mathbf{B}\|_F$: unbounded operator growth proceeds unopposed, with the rollout magnitudes $\|\mathbf{K}_\mathrm{eff}^h\mathbf{z}_T\|$ growing geometrically in~$h$. $\mathcal{L}_\mathrm{KAL}$ prevents this collapse: if $\mathbf{z}_T \approx \mathbf{0}$ then $\phi^{-1}(\mathbf{K}_\mathrm{eff}^h \mathbf{z}_T) \approx \phi^{-1}(\mathbf{0})$, a constant independent of $h$, which cannot match $f^*_{T+h}$ for all $h \in \mathcal{H}$ simultaneously since target features vary with the subject motion.
\end{proposition}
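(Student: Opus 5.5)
The proof has three parts: an explicit construction of the degenerate minimiser, a chain-rule bound on the gradient with respect to $\mathbf{B}$, and a short argument that $\mathcal{L}_\mathrm{KAL}$ rules the construction out.

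\emph{Construction of the collapse.} Fix any parameter setting $(\phi,\phi^{-1},\mathbf{B},\mathbf{U},\mathbf{V},\gamma)$ attaining a given value of $\mathcal{L}_\mathrm{pred}+\lambda_e\mathcal{L}_\mathrm{est}$. For $s>0$ define $\phi_s:=s\,\phi$ and $\phi_s^{-1}(\mathbf{z}):=\phi^{-1}(\mathbf{z}/s)$. Since $\mathbf{K}_\mathrm{eff}$ acts linearly,
\[
\mathbf{K}_\mathrm{eff}^h\,\phi_s(\tilde{\mathbf{f}}_T)=s\,\mathbf{K}_\mathrm{eff}^h\,\phi(\tilde{\mathbf{f}}_T).
\]
Consequently $\Delta_\theta$ built from $\phi_s^{-1}$ returns exactly the original deltas, and every $\hat{\mathbf{p}}_{T+h}$ is unchanged. $\mathcal{L}_\mathrm{est}$ is untouched, because the HPE module is gradient-detached and does not depend on $\phi$.

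Choosing $s<\varepsilon/\sup\|\phi(\tilde{\mathbf{f}})\|$ places all lifted states in an $\varepsilon$-ball around the origin. The final LayerNorm in the MLPs has a learnable affine gain, so I will absorb $s$ into that gain (and $1/s$ into the first layer of $\mathrm{MLP}_\mathrm{dec}$). This keeps the rescaled maps inside the parametric family, which is the one technical point to check. The resulting family of minimisers is non-isolated.

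\emph{$\mathbf{B}$ is unconstrained.} The remaining objective depends on $\mathbf{B}$ only through the predictions. So $\|\mathbf{B}\|_F$ carries no explicit penalty: there is no weight decay term in the stated objective, and unlike KAL the objective has no reconstruction term to pin scale or dynamics.

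\emph{Gradient bound.} Write $\mathbf{z}_{T+h}=\mathbf{K}_\mathrm{eff}^h\mathbf{z}_T$ and
\[
\mathcal{L}^{(h)}_\mathrm{pred}=\|\bar{\mathbf{p}}+\Delta_\theta(\mathbf{z}_{T+h})-\mathbf{p}_{T+h}\|^2 .
\]
The chain rule gives $\partial\mathcal{L}/\partial\mathbf{z}_{T+h}=2J_\Delta^\top(\hat{\mathbf{p}}-\mathbf{p})$ with $\|J_\Delta\|\le L_\Delta$. Next, $\partial\mathbf{K}^h/\partial\mathbf{B}$ applied to a perturbation $\mathbf{E}$ is $\sum_{i=0}^{h-1}\mathbf{K}^i\mathbf{E}\mathbf{K}^{h-1-i}$, since $\partial\mathbf{K}_\mathrm{eff}/\partial\mathbf{B}$ is the identity map. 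The gradient is therefore
\[
\sum_i (\mathbf{K}^{i})^\top\mathbf{g}\,(\mathbf{K}^{h-1-i}\mathbf{z}_T)^\top .
\]
Each summand is a rank-one outer product with Frobenius norm at most $\|\mathbf{K}\|_2^{h-1}\|\mathbf{g}\|\|\mathbf{z}_T\|$. Summing the $h$ terms yields \eqref{eq:grad_explosion}.

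Growth of the bound follows from the triangle inequality:
\[
\|\mathbf{K}_\mathrm{eff}\|_2\ge\|\mathbf{B}\|_2-1-\gamma\|\mathbf{U}\|\|\mathbf{V}\|,
\]
and $\|\mathbf{B}\|_2\ge\|\mathbf{B}\|_F/\sqrt{D_z}$. Geometric growth of $\|\mathbf{K}^h\mathbf{z}_T\|$ holds along the top singular direction.

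\emph{Role of KAL.} With KAL present, if $\mathbf{z}_T\to\mathbf{0}$ then continuity of $\phi^{-1}$ gives $\phi^{-1}(\mathbf{K}^h\mathbf{z}_T)\to\phi^{-1}(\mathbf{0})$ for every fixed $h$. Hence $\mathbf{f}^{\mathrm{KAL}}_{T+h}\to\tilde{\mathbf{f}}_T$ for all $h$. The KAL loss is then bounded below by $\sum_h w_h\|\tilde{\mathbf{f}}_T-\mathbf{f}^*_{T+h}\|^2$, which is strictly positive whenever the targets $\mathbf{f}^*_{T+h}$ vary with $h$.

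\emph{Main obstacle.} The hardest step is making ``collapse'' rigorous. The pure rescaling argument above applies only to a $\phi^{-1}$ that genuinely undoes the scaling, not to a fixed continuous $\phi^{-1}$. I will state the KAL part for the collapsed family where $\phi^{-1}$ is continuous with a modulus independent of $s$. I will also note that under the co-adapted rescaling itself KAL is invariant, so what KAL actually pins is the reconstruction anchor $\mathbf{r}_T\approx\tilde{\mathbf{f}}_T$ and the nontrivial $h$-dependence. I expect to need to phrase this part qualitatively, as the proposition itself does.
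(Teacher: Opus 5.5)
Your argument is the paper's proof essentially step for step. It uses the same co-adapted rescaling $\phi_s=s\phi$, $\phi_s^{-1}=\phi^{-1}(\cdot/s)$ with $\mathcal{L}_\mathrm{est}$ untouched; your absorption of $s$ into the LayerNorm gain and of $1/s$ into the first layer of $\mathrm{MLP}_\mathrm{dec}$ is a more careful version of the paper's bare ``realisable by the MLP''. It also uses the same matrix-power product rule with sub-multiplicativity and the Lipschitz chain rule for the gradient bound, the same reverse triangle inequality with $\|\mathbf{B}\|_2\ge\|\mathbf{B}\|_F/\sqrt{D_z}$, and the same KAL argument. Your remark that $\mathcal{L}_\mathrm{KAL}$ is itself invariant under the co-adapted rescaling is correct. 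The paper's Step~4 evades it exactly as you plan to, by silently passing to a fixed decoder with bounded Lipschitz constant, so that restriction leaves you on the same footing as the paper.

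The step that fails is ``geometric growth of $\|\mathbf{K}_\mathrm{eff}^h\mathbf{z}_T\|$ holds along the top singular direction''. A large $\|\mathbf{K}_\mathrm{eff}\|_2$ only gives the upper bound $\|\mathbf{K}_\mathrm{eff}^h\|_2\le\|\mathbf{K}_\mathrm{eff}\|_2^h$. The top right singular vector is stretched by $\sigma_1$ once, but its image need not be stretched again; the growth rate of $\|\mathbf{K}_\mathrm{eff}^h\mathbf{z}_T\|$ is set by the spectral radius on the cyclic subspace of $\mathbf{z}_T$. A large $\|\mathbf{B}\|_F$ does not force $\rho(\mathbf{K}_\mathrm{eff})>1$. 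For example, take $\mathbf{U}=\mathbf{0}$ and $\mathbf{B}=-\tfrac12\mathbf{I}+\beta\,\mathbf{e}_1\mathbf{e}_2^\top$. Then $\|\mathbf{K}_\mathrm{eff}\|_2\ge\beta-\tfrac12$, yet $\mathbf{K}_\mathrm{eff}^h=2^{-h}\mathbf{I}+h\,2^{1-h}\beta\,\mathbf{e}_1\mathbf{e}_2^\top$ tends to $\mathbf{0}$ as $h$ grows. So this clause needs an explicit spectral hypothesis: $\rho(\mathbf{K}_\mathrm{eff})>1$, and a nonzero component of $\mathbf{z}_T$ on a generalised eigenspace with $|\lambda|>1$. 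The paper merely asserts the growth, so neither version proves it.

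Relatedly, the scalar collapse contributes nothing to ``$\|\mathbf{B}\|_F$ is unconstrained''. It leaves $\mathbf{B}$, $\mathbf{K}_\mathrm{eff}$ and even $\nabla_{\mathbf{B}}\mathcal{L}^{(h)}_\mathrm{pred}$ unchanged, because the $1/s$ in the Jacobian of $\phi_s^{-1}$ cancels the $s$ in $\mathbf{z}_T$. Your ``no explicit penalty'' is equally true with $\mathcal{L}_\mathrm{KAL}$ present: its reconstruction term does not involve $\mathbf{B}$, and its $k{>}0$ terms see $\mathbf{B}$ only through $\phi^{-1}(\mathbf{K}_\mathrm{eff}^h\mathbf{z}_T)$. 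So your phrase ``unlike KAL'' is unjustified, and the paper's Step~2 has the same non sequitur. A rigorous witness is the diagonal conjugation $\phi\mapsto D\phi$, $\phi^{-1}\mapsto\phi^{-1}\circ D^{-1}$, $\mathbf{U}\mapsto D\mathbf{U}$, $\mathbf{V}\mapsto D^{-1}\mathbf{V}$, which is realisable through the same LayerNorm gain and first/last linear layers. It sends $\mathbf{B}\mapsto D\mathbf{B}D^{-1}$ and leaves all predictions and $\mathcal{L}_\mathrm{KAL}$ unchanged. It makes $\|\mathbf{B}\|_F$ arbitrarily large whenever $\mathbf{B}$ has a nonzero off-diagonal entry, regardless of whether KAL is present. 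Because it preserves the spectrum, it is one more reason that growth of $\|\mathbf{K}_\mathrm{eff}\|_2$ alone says nothing about the rollouts.
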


\paragraph{CSI Pose Estimation Loss.}
The HPE module is supervised with ground-truth observed poses:
\begin{equation}
  \mathcal{L}_\mathrm{est}
  \;=\;
  \frac{1}{T}
  \sum_{t=1}^{T}
  \|\tilde{\mathbf{p}}_t - \mathbf{p}_t\|_2^2.
  \label{eq:est_loss}
\end{equation}
This is the sole training signal for the HPE module; because estimated poses are detached before entering the skeleton encoder, the prediction loss cannot bias the HPE module towards features that help prediction at the cost of localization accuracy.

\paragraph{Total Loss.}
The total loss function can be expressed by the following:
\begin{equation}
  \mathcal{L}
  \;=\;
  \lambda_p\,\mathcal{L}_\mathrm{pred}
  \;+\;
  \lambda_k\,\mathcal{L}_\mathrm{KAL}
  \;+\;
  \lambda_e\,\mathcal{L}_\mathrm{est},
  \label{eq:total_loss}
\end{equation}
with $\lambda_p=1.5$, $\lambda_k=0.5$, $\lambda_e=1.0$. This three-term objective is more compact than the seven-term objectives used in prior Koopman-based motion prediction methods, which required separate linearity, contrastive, reconstruction, auxiliary, and spectral-stability losses. The KAL loss subsumes reconstruction (the $k{=}0$ term), Koopman linearity (via feature-space prediction), and the stability function previously served by InfoNCE, without requiring batch-negative mining or spectral norm estimation at each training step.

% ; predictions are evaluated at $\mathcal{H}=\{1,3,5,10,15,20\}$ frames corresponding to $100$-$2000\,\mathrm{ms}$ at the MM-Fi sampling rate.

\begin{table}[!t]
\centering
\caption{Performance comparison across prediction horizons on the WiPose dataset.}
\label{tab:comparisonwipose}
% \setlength{\tabcolsep}{25 pt}
% \resizebox{\linewidth}{!}{
\begin{tabular}{l|l|cccc}
\toprule
\textbf{Methods} & \textbf{Metrics} 
& \textbf{100ms} & \textbf{300ms} & \textbf{500ms} & \textbf{1000ms} \\
\midrule

\multirow{2}{*}{MetaFi++ \citep{zhou2023}}
& MPJPE $\downarrow$ & 42.74 & 41.78 & 40.62 & 38.64 \\
& PA-MPJPE $\downarrow$ & 26.62 & 26.21 & 26.22 & 25.74 \\
\midrule

\multirow{2}{*}{HPE-Li \citep{10.1007/978-3-031-72751-1_6}}
& MPJPE $\downarrow$ & 40.00 & 39.64 & 39.98 & 39.69 \\
& PA-MPJPE $\downarrow$ & 26.69 & 26.60 & 26.92 & 27.01 \\
\midrule

\multirow{2}{*}{GraphPose-Fi \citep{chen2026graph}}
& MPJPE $\downarrow$ & 31.91 & 31.36 & 31.03 & 31.07 \\
& PA-MPJPE $\downarrow$ & 23.48 & 23.27 & 23.13 & 23.43 \\
\midrule

\multirow{2}{*}{DT-Pose \citep{dtpose}}
& MPJPE $\downarrow$ & 36.87 & 36.19 & 34.66 & 32.88 \\
& PA-MPJPE $\downarrow$ & 24.11 & 23.94 & 23.76 & 23.62 \\
\midrule

\multirow{2}{*}{ConvLSTM \citep{10.5555/2969239.2969329}}
& MPJPE $\downarrow$ & 66.44 & 65.91 & 66.02 & 66.57 \\
& PA-MPJPE $\downarrow$ & 37.00 & 36.96 & 36.87 & 37.15 \\
\midrule

\multirow{2}{*}{VMRNN \citep{10677992}}
& MPJPE $\downarrow$ & 32.20 & 31.68 & 31.28 & 31.22 \\
& PA-MPJPE $\downarrow$ & 23.36 & 23.37 & 23.32 & 23.30 \\
\midrule

\multirow{2}{*}{LTD \citep{mao2019learning}}
& MPJPE $\downarrow$ 
& 42.07 & 41.00 & 40.38 & 39.35 \\
& PA-MPJPE $\downarrow$ 
& 28.57 & 28.41 & 28.44 & 28.38 \\
\midrule

\multirow{2}{*}{siMLPe \citep{guo2023simlpe}}
& MPJPE $\downarrow$ & 39.54 & 39.41 & 39.24 & 39.14 \\
& PA-MPJPE $\downarrow$ & 26.62 & 26.95 & 27.07 & 26.60 \\
\midrule

\multirow{2}{*}{\textbf{KOALA (Ours)}}
& MPJPE $\downarrow$   & 26.14 & 26.03 & 26.49 & 27.28 \\
& PA-MPJPE $\downarrow$ & 20.81 & 20.77 & 20.98 & 21.28 \\
\bottomrule
\end{tabular}
% }
\end{table}

\begin{table*}[!t]
\centering
\caption{Evaluation performance across protocols and settings at different horizons (ms) on the MM-Fi.}
\label{tab:mmfi}
\resizebox{\textwidth}{!}{%
\begin{tabular}{c|l|cccccc|cccccc|cccccc}
\toprule
\multirow{2}{*}{\textbf{Setting}} &
\multirow{2}{*}{\textbf{Metric}}
& \multicolumn{6}{c}{\textbf{Protocol 1}}
& \multicolumn{6}{c}{\textbf{Protocol 2}}
& \multicolumn{6}{c}{\textbf{Protocol 3}} \\
\cmidrule(lr){3-8}
\cmidrule(lr){9-14}
\cmidrule(lr){15-20}
&
& 100 & 300 & 500 & 1000 & 1500 & 2000
& 100 & 300 & 500 & 1000 & 1500 & 2000
& 100 & 300 & 500 & 1000 & 1500 & 2000 \\
\midrule

\multirow{4}{*}{S1}
& MPJPE $\downarrow$
& 54.5 & 58.2 & 62.5 & 66.7 & 65.0 & 63.6
& 53.2 & 56.9 & 60.8 & 64.0 & 62.1 & 61.0
& 55.2 & 62.3 & 69.2 & 76.2 & 73.3 & 69.8 \\

& PA-MPJPE $\downarrow$
& 45.5 & 48.0 & 51.2 & 54.1 & 53.4 & 52.6
& 44.3 & 46.6 & 49.1 & 51.0 & 50.2 & 49.8
& 46.5 & 52.6 & 58.3 & 61.3 & 63.2 & 60.1 \\

& PCK@20 $\uparrow$
& 91.0 & 89.5 & 87.6 & 85.6 & 86.5 & 87.3
& 91.3 & 89.9 & 88.2 & 86.6 & 87.5 & 88.3
& 90.6 & 87.8 & 85.0 & 82.3 & 83.7 & 85.3 \\

& PCK@10 $\uparrow$
& 72.5 & 69.9 & 67.3 & 64.7 & 65.9 & 67.1
& 73.1 & 70.6 & 68.1 & 66.0 & 67.2 & 68.3
& 72.0 & 68.3 & 65.1 & 62.3 & 64.1 & 65.8 \\
\midrule

\multirow{4}{*}{S2}
& MPJPE $\downarrow$
& 55.7 & 62.0 & 68.4 & 74.7 & 71.9 & 68.7
& 52.8 & 58.2 & 63.5 & 68.0 & 65.4 & 63.5
& 53.1 & 59.0 & 64.5 & 68.9 & 66.4 & 64.4 \\

& PA-MPJPE $\downarrow$
& 47.3 & 52.5 & 57.7 & 62.8 & 61.8 & 59.2
& 44.0 & 48.0 & 51.8 & 54.7 & 53.5 & 52.6
& 44.7 & 49.0 & 52.9 & 55.5 & 54.4 & 53.5 \\

& PCK@20 $\uparrow$
& 90.3 & 87.8 & 85.2 & 82.6 & 84.1 & 85.6
& 91.5 & 89.4 & 87.2 & 85.2 & 86.3 & 87.4
& 91.3 & 89.1 & 86.7 & 84.6 & 85.8 & 86.9 \\

& PCK@10 $\uparrow$
& 71.7 & 68.5 & 65.3 & 62.6 & 61.4 & 66.2
& 73.5 & 70.1 & 67.1 & 64.2 & 65.9 & 67.7
& 73.4 & 69.8 & 66.7 & 63.8 & 65.5 & 67.2 \\
\midrule

\multirow{4}{*}{S3}
& MPJPE $\downarrow$
& 52.1 & 57.0 & 61.9 & 65.7 & 63.4 & 61.9
& 51.5 & 54.6 & 58.5 & 62.2 & 60.6 & 59.4
& 52.6 & 58.3 & 63.2 & 67.2 & 64.5 & 62.5 \\

& PA-MPJPE $\downarrow$
& 43.5 & 47.0 & 50.5 & 52.8 & 51.8 & 51.1
& 42.8 & 44.9 & 47.6 & 50.0 & 49.2 & 48.9
& 41.2 & 48.5 & 52.0 & 54.3 & 52.9 & 51.7 \\

& PCK@20 $\uparrow$
& 91.8 & 89.9 & 87.8 & 85.9 & 86.9 & 87.9
& 92.0 & 90.9 & 89.3 & 87.6 & 88.3 & 88.9
& 91.6 & 89.3 & 87.2 & 85.4 & 86.5 & 87.7 \\

& PCK@10 $\uparrow$
& 73.9 & 70.7 & 67.8 & 65.1 & 66.5 & 67.7
& 74.3 & 72.1 & 69.6 & 67.0 & 68.1 & 69.5
& 73.7 & 70.2 & 67.3 & 64.6 & 66.3 & 67.8 \\
\bottomrule
\end{tabular}
}
\end{table*}

\begin{table}[!ht]
\centering
\caption{Performance comparison across prediction horizons on the MM-Fi dataset.}
\label{tab:comparison}
\resizebox{\linewidth}{!}{
\begin{tabular}{l|l|cccccc}
\toprule
\textbf{Methods} & \textbf{Metrics} 
& \textbf{100ms} & \textbf{300ms} & \textbf{500ms} 
& \textbf{1000ms} & \textbf{1500ms} & \textbf{2000ms} \\
\midrule

\multirow{4}{*}{HPE-Li \citep{10.1007/978-3-031-72751-1_6}}
& MPJPE $\downarrow$   & 375.2 & 383.4 & 351.6 & 351.2 & 339.8 & 368.5 \\
& PA-MPJPE $\downarrow$ & 120.7 & 122.3 & 119.4 & 118.4 & 119.5 & 122.8 \\
& PCK@20 $\uparrow$    & 3.8   & 4.3   & 7.0   & 8.6   & 10.0  & 6.3   \\
& PCK@10 $\uparrow$    & 0.3   & 0.2   & 0.8   & 1.0   & 1.1   & 0.5   \\
\midrule

\multirow{4}{*}{MetaFi++ \citep{zhou2023}}
& MPJPE $\downarrow$   & 311.7 & 308.4 & 311.2 & 312.8 & 316.9 & 312.0 \\
& PA-MPJPE $\downarrow$ & 105.5 & 105.4 & 104.6 & 104.7 & 105.1 & 105.1 \\
& PCK@20 $\uparrow$    & 78.8  & 80.7  & 78.4  & 78.4  & 75.9  & 80.1  \\
& PCK@10 $\uparrow$    & 18.1  & 17.7  & 18.4  & 17.2  & 18.2 & 16.8  \\
\midrule

\multirow{4}{*}{GraphPose-Fi \citep{chen2026graph}}
& MPJPE $\downarrow$    & 330.5 & 326.9 & 328.9 & 331.5 & 335.7 & 334.9 \\
& PA-MPJPE $\downarrow$ & 110.7 & 109.5 & 109.6 & 110.4 & 111.4 & 111.2 \\
& PCK@20 $\uparrow$     & 70.4  & 72.5  & 70.7  & 69.1  & 70.7  & 70.3 \\
& PCK@10 $\uparrow$     & 10.5  & 11.9  & 11.2 & 10.7  & 12.0  & 12.1 \\
\midrule

\multirow{4}{*}{DT-Pose \citep{dtpose}}
& MPJPE $\downarrow$    & 337.1 & 342.9 & 356.0 & 365.0 & 347.2 & 337.2 \\
& PA-MPJPE $\downarrow$ & 106.9 & 107.2 & 107.2 & 107.1 & 106.9 & 106.7 \\
& PCK@20 $\uparrow$     & 48.9 & 46.3 & 39.0 & 34.9 & 46.4 & 50.4 \\
& PCK@10 $\uparrow$     & 5.9 & 5.3 & 3.5 & 3.4 & 5.9 & 6.2 \\
\midrule

\multirow{4}{*}{ConvLSTM \citep{10.5555/2969239.2969329}}
& MPJPE $\downarrow$   & 345.0 & 340.9 & 338.7 & 334.9 & 334.3 & 342.1 \\
& PA-MPJPE $\downarrow$ & 102.9 & 102.8 & 102.7 & 102.6 & 102.8 & 102.5 \\
& PCK@20 $\uparrow$    & 44.78  & 47.04  & 49.17  & 51.74  & 51.71  & 46.89  \\
& PCK@10 $\uparrow$    & 3.7 & 4.2 & 4.7 & 5.5 & 5.7 & 4.14   \\
\midrule

\multirow{4}{*}{MIM \citep{8953605}}
& MPJPE $\downarrow$   & 335.6 & 337.5 & 340.1 & 339.8 & 338.5 & 337.3 \\
& PA-MPJPE $\downarrow$ & 102.9 & 102.8 & 102.6 & 102.7 & 102.5 & 102.1 \\
& PCK@20 $\uparrow$    & 53.85  & 52.75  & 49.87  & 49.44  & 51.64  & 52.90  \\
& PCK@10 $\uparrow$    & 6.2   & 5.7   & 5.4   & 5.20  & 5.9   & 5.8   \\
\midrule

\multirow{4}{*}{SwinLSTM \citep{10376574}}
& MPJPE $\downarrow$   & 332.6 & 330.8 & 330.9 & 335.5 & 334.70 & 340.8 \\
& PA-MPJPE $\downarrow$ & 101.7 & 101.8 & 101.9 & 101.8 & 101.8 & 102.3 \\
& PCK@20 $\uparrow$    & 53.4  & 53.3  & 53.8  & 51.1  & 51.8  & 49.4 \\
& PCK@10 $\uparrow$    & 5.7   & 5.8   & 5.72  & 4.9   & 5.4   & 4.7   \\
\midrule

\multirow{4}{*}{VMRNN \citep{10677992}}
& MPJPE $\downarrow$   & 338.2 & 340.1 & 343.1 & 349.9 & 343.8 & 333.7 \\
& PA-MPJPE $\downarrow$ & 102.6 & 101.9 & 101.9 & 102.0 & 101.8 & 102.1 \\
& PCK@20 $\uparrow$    & 48.1  & 48.7  & 47.5  & 43.9  & 48.5  & 52.4  \\
& PCK@10 $\uparrow$    & 7.2   & 7.8   & 7.6   & 7.4   & 7.9   & 8.3  \\
\midrule

\multirow{4}{*}{LTD \citep{mao2019learning}}
& MPJPE $\downarrow$   & 365.4 & 362.7 & 359.6 & 364.7 & 368.5 & 368.9 \\
& PA-MPJPE $\downarrow$ & 107.99 & 108.19 & 108.41 & 108.13 & 107.94 & 107.88 \\
& PCK@20 $\uparrow$    & 34.3  & 35.3  & 36.5  & 32.92 & 32.1  & 33.6  \\
& PCK@10 $\uparrow$    & 2.41   & 2.72   & 3.06   & 2.40   & 2.23   & 2.47   \\
\midrule

\multirow{4}{*}{siMLPe \citep{guo2023simlpe}}
& MPJPE $\downarrow$   & 348.7 & 357.1 & 365.7 & 376.9 & 383.6 & 380.4 \\
& PA-MPJPE $\downarrow$ & 106.3 & 105.8 & 104.2 & 107.4 & 107.4 & 108.2 \\
& PCK@20 $\uparrow$    & 28.34  & 28.25  & 25.07  & 20.75  & 18.74  & 20.14  \\
& PCK@10 $\uparrow$    & 2.82   & 3.46   & 3.35   & 2.98   & 2.61   & 2.10   \\
\midrule

\multirow{4}{*}{AuxFormer \citep{xu2023auxformer}}
& MPJPE $\downarrow$   & 373.2 & 385.9 & 382.5 & 357.2 & 394.5 & 367.6 \\
& PA-MPJPE $\downarrow$ & 103.6 & 101.8 & 102.2 & 102.8 & 101.4 & 101.49 \\
& PCK@20 $\uparrow$    & 18.6  & 15.5 & 18.5 & 24.75 & 11.74  & 21.75  \\
& PCK@10 $\uparrow$    & 1.53   & 1.89   & 1.75   & 2.90   & 1.06   & 3.07   \\
\midrule

\multirow{4}{*}{\textbf{KOALA (Ours)}}
& MPJPE $\downarrow$    & 52.1 & 57.0 & 61.9 & 65.7 & 63.4 & 61.9 \\
& PA-MPJPE $\downarrow$ & 43.5 & 47.0 & 50.5 & 52.8 & 51.8 & 51.1 \\
& PCK@20 $\uparrow$     & 91.8 & 89.9 & 87.8 & 85.9 & 86.9 & 87.9 \\
& PCK@10 $\uparrow$     & 73.9 & 70.7 & 67.8 & 65.1 & 66.5 & 67.7 \\
\bottomrule
\end{tabular}
}
\end{table}

\section{Experiments}
\subsection{Datasets} \label{sec:datasets}
We evaluate on two WiFi-based datasets: MM-Fi~\citep{yang2023mmfi} and WiPose~\citep{9889024}. MM-Fi provides large-scale multimodal data with synchronized CSI streams across 40 subjects, 27 actions, and 4 environments under three protocols (P1: daily activities; P2: rehabilitation exercises; P3: all 27 actions) and three settings (S1: random split; S2: cross-subject; S3: cross-environment). WiPose provides fine-grained CSI data with skeletal annotations for 2D pose understanding. Following the standard evaluation protocols, we evaluate on MPJPE and PA-MPJPE in millimeters, alongside PCKs of $20\%$ and $10\%$.

% \subsection{Results}
\subsection{2D Motion Prediction}

Table~\ref{tab:comparisonwipose} reports results on the WiPose dataset. KOALA achieves the lowest MPJPE and PA-MPJPE at every horizon, reaching 26.14\,mm at 100\,ms and 27.28\,mm at 1000\,ms, an improvement of approximately 19\% over the strongest baseline VMRNN. Notably, KOALA degradation across horizons is only 1.14\,mm in MPJPE, showing that the Koopman operator propagates latent states without meaningful error accumulation. Among baselines, ConvLSTM performs considerably worse, confirming that naive spatiotemporal convolution struggles in the lower-dimensional 2D pose space, while trajectory-based methods (LTD, siMLPe) and CSI-specialized architectures (MetaFi++, HPE-Li, DT-Pose, GraphPose-Fi) cluster in the 31-42\,mm range, unable to close the gap with state-space approaches.

\subsection{3D Motion Prediction}
Table~\ref{tab:mmfi} reports KOALA on MM-Fi across three protocols, three settings, and six horizons. MPJPE grows moderately with horizon, with PA-MPJPE below 65.3\,mm and PCK@20 above 82.3\% across all configurations. S3 achieves the lowest short-horizon errors (52.1\,mm under P1 at 100\,ms), while S2 is the most challenging (74.7\,mm at 1000\,ms under P1) due to unseen body geometries. Protocol~2 yields the lowest errors under S2 and S3 owing to its constrained action set, whereas Protocol~3 maintains PCK@20 above 85.4\% at 1000\,ms despite covering all 27 categories. Figure~\ref{qualit} confirms coherent rollouts with anatomically plausible skeletons at all horizons (qualitative results in Appendix~\ref{qual}; ablations in Appendix~\ref{abla}).

\begin{figure}
    \centering
    \includegraphics[width=0.9\linewidth]{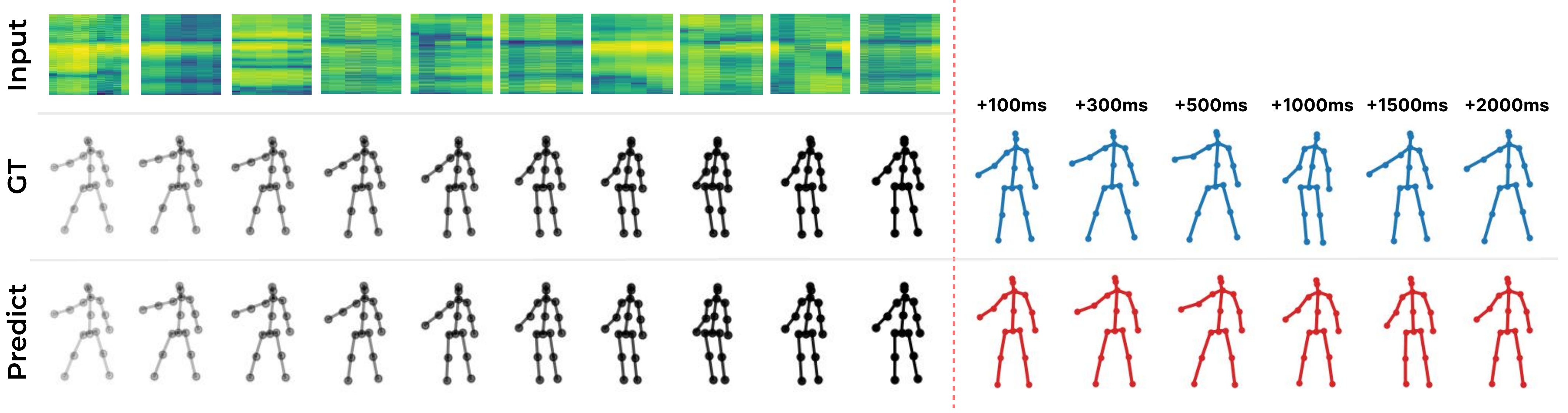}
    \caption{Qualitative results between predicted poses and Ground Truth (GT) for different time steps.}
    \label{qualit}
\end{figure}

Table~\ref{tab:comparison} compares KOALA against nine baselines on MM-Fi, spanning WiFi pose estimation methods (HPE-Li, MetaFi++, DT-Pose, GraphPose-Fi), spatiotemporal prediction models (ConvLSTM, MIM, SwinLSTM, VMRNN), and skeleton-based methods (LTD, siMLPe, AuxFormer). KOALA outperforms all baselines by a substantial margin across every metric and horizon: at 100\,ms it achieves 52.1\,mm MPJPE, roughly 6.4$\times$ lower than the best competing method SwinLSTM (332.6\,mm) and 73.9\% PCK@10 while no baseline exceeds 18.1\%. Among WiFi baselines, MetaFi++ consistently outperforms other architectures, confirming that stronger pose backbones yield more temporally stable estimates. Skeleton-based methods perform poorly under CSI-derived pose noise, whereas spatiotemporal models exhibit near-flat error profiles across horizons, suggesting mean regression rather than genuine temporal modeling. KOALA shows a generally increasing error trend with horizon, reflecting genuine temporal modeling rather than static averaging.

% The theoretical analysis is empirically supported: KOALA's MPJPE grows sublinearly with horizon (52.1mm at 100ms to 61.9mm at 2000ms), consistent with the near-linear growth bound in Theorem \ref{thm:bounded}. The residual CSI-conditioned operator enables cross-environment generalization (PCK@20 > 85\% under S3), and the delta head produces increasing delta magnitudes with horizon, confirming genuine motion modeling rather than anchor reliance.

\section{Conclusion}
We presented KOALA, a novel framework for anticipatory human motion prediction directly from WiFi CSI. By lifting noisy CSI-derived pose sequences into a learned Koopman latent space, KOALA achieves multi-horizon prediction through simple matrix-vector products without autoregressive iteration. The residual parametrization resolves the identity attractor, the anchor-delta head eliminates mean-pose collapse, and the KAL loss enforces dynamical consistency without contrastive, spectral, or auxiliary losses. Experiments on MM-Fi and WiPose demonstrate substantial improvements over all baselines, with MPJPE growing moderately across horizons rather than accumulating exponentially as in autoregressive approaches. The current single-person formulation does not address signal superposition in multi-occupant settings. The CSI-conditioned operator partially accounts for environment variation but has not been validated under zero-shot cross-site transfer. We acknowledge that KOALA produces a single deterministic prediction per horizon and cannot capture the full distribution of possible future motions. Future work could extend KOALA to probabilistic prediction by modeling the distribution in the Koopman latent space. Finally, the Koopman assumption of approximately time-invariant dynamics within the observation window may break down at abrupt action transitions, a direction that is addressable through mixture-of-operators or mode-switching extensions.

\subsubsection*{Broader Impact Statement}
This work contributes to the development of machine learning techniques for modeling temporal dynamics in wireless sensing data. The proposed approach has potential applications in areas such as healthcare monitoring and intelligent environments, while providing a more privacy-conscious alternative to vision-based sensing systems.

Similar to other sensing technologies, misuse or inappropriate deployment may raise concerns regarding privacy, user consent, and surveillance. However, these considerations are not unique to our approach and are broadly applicable across wireless and machine-learning-based sensing systems.

Overall, we do not anticipate any significant societal risks beyond those commonly associated with existing sensing and machine learning technologies.

% \newpage
\bibliography{tmlr}
\bibliographystyle{tmlr}

\newpage

\appendix
% \\
% \newline
\large{\textbf{APPENDIX}}

\section{Proof of Remark \ref{prop:lifting_info}} \label{profprof}
% \begin{proof}
% By the triangle inequality, $\|\tilde{\mathbf{f}}_a-\tilde{\mathbf{f}}_b\| \leq \|\tilde{\mathbf{f}}_a-\phi^{-1}(\phi(\tilde{\mathbf{f}}_a))\| +\|\phi^{-1}(\phi(\tilde{\mathbf{f}}_a))-\phi^{-1}(\phi(\tilde{\mathbf{f}}_b))\| +\|\phi^{-1}(\phi(\tilde{\mathbf{f}}_b))-\tilde{\mathbf{f}}_b\|$. Since $\mathcal{L}_\mathrm{rec}\leq\epsilon_r$ implies each round-trip error is at most $\sqrt{\epsilon_r}$, the first and third terms are each bounded by $\sqrt{\epsilon_r}$.
% \end{proof}

\begin{proof}
Fix any two training features $\tilde{\mathbf{f}}_a, \tilde{\mathbf{f}}_b \in \mathbb{R}^d$. By the triangle inequality:
\begin{align*}
  \|\tilde{\mathbf{f}}_a - \tilde{\mathbf{f}}_b\|
  &\leq
  \|\tilde{\mathbf{f}}_a - \phi^{-1}(\phi(\tilde{\mathbf{f}}_a))\|
  + \|\phi^{-1}(\phi(\tilde{\mathbf{f}}_a))
      - \phi^{-1}(\phi(\tilde{\mathbf{f}}_b))\|
  + \|\phi^{-1}(\phi(\tilde{\mathbf{f}}_b))
      - \tilde{\mathbf{f}}_b\|.
\end{align*}
The first and third terms are individual round-trip reconstruction errors. Since $\mathcal{L}_\mathrm{rec} \leq \epsilon_r$ implies each round-trip error is at most $\sqrt{\epsilon_r}$ on the training manifold, both terms are bounded by $\sqrt{\epsilon_r}$, giving:
\begin{equation}
  \|\tilde{\mathbf{f}}_a - \tilde{\mathbf{f}}_b\|
  \leq
  \|\phi^{-1}(\phi(\tilde{\mathbf{f}}_a))
   - \phi^{-1}(\phi(\tilde{\mathbf{f}}_b))\|
  + 2\sqrt{\epsilon_r}.
\end{equation}
Therefore, whenever $\|\tilde{\mathbf{f}}_a - \tilde{\mathbf{f}}_b\| > 2\sqrt{\epsilon_r}$, the middle term must be strictly positive, which implies $\phi(\tilde{\mathbf{f}}_a) \neq \phi(\tilde{\mathbf{f}}_b)$. Hence $\phi$ is approximately injective on the training manifold, with injectivity guaranteed for any pair separated by more than $2\sqrt{\epsilon_r}$.
\end{proof}

\section{Proof of Theorem \ref{thm:bounded}} \label{theoprof}
\begin{proof}
Let the latent prediction error after $h$ Koopman transitions be decomposed as
\begin{equation}
\mathbf{K}_{\mathrm{eff}}^h\mathbf{z}_{T}
-\mathbf{z}^{*}_{T+h}
=
\left(
\mathbf{K}_{\mathrm{eff}}^h\mathbf{z}_{T}
-\mathbf{z}_{T+h}
\right)
+
\left(
\mathbf{z}_{T+h}
-\mathbf{z}^{*}_{T+h}
\right),
\end{equation}
where $\mathbf{z}^{*}_{T+h}=\phi(F^h(v_T))$ denotes the ideal Koopman embedding of the true future state.

The first term corresponds to the accumulated Koopman invariance defect. By applying the standard telescoping expansion,
\begin{equation}
\mathbf{K}_{\mathrm{eff}}^h\mathbf{z}_{T}
-\mathbf{z}_{T+h}
=
\sum_{k=0}^{h-1}
\mathbf{K}_{\mathrm{eff}}^{h-1-k}
\left(
\mathbf{K}_{\mathrm{eff}}\mathbf{z}_{T+k}
-\mathbf{z}_{T+k+1}
\right).
\end{equation}

Taking the norm and using the triangle inequality gives
\begin{equation}
\begin{aligned}
\left\|
\mathbf{K}_{\mathrm{eff}}^h\mathbf{z}_{T}
-\mathbf{z}_{T+h}
\right\|
&\leq
\sum_{k=0}^{h-1}
\left\|
\mathbf{K}_{\mathrm{eff}}
\right\|_2^{h-1-k}
\left\|
\mathbf{K}_{\mathrm{eff}}\mathbf{z}_{T+k}
-\mathbf{z}_{T+k+1}
\right\|\\
&\leq
\sum_{k=0}^{h-1}
\kappa^{h-1-k}\epsilon\\
&=
\frac{\kappa^h-1}{\kappa-1}\epsilon ,
\end{aligned}
\end{equation}
where $\kappa=\|\mathbf{K}_{\mathrm{eff}}\|_2$ and $\epsilon$ denotes the one-step Koopman invariance defect.

Let
\begin{equation}
\delta_h=
\left\|
\mathbf{z}_{T+h}
-
\mathbf{z}^{*}_{T+h}
\right\|
\end{equation}
represent the latent lifting approximation error. Therefore,
\begin{equation}
\left\|
\mathbf{K}_{\mathrm{eff}}^h\mathbf{z}_{T}
-\mathbf{z}^{*}_{T+h}
\right\|
\leq
\frac{\kappa^h-1}{\kappa-1}\epsilon
+
\delta_h .
\end{equation}

We next map the latent error into the pose space. The predicted pose is generated by the anchor-based delta head:
\begin{equation}
\hat{\mathbf{p}}_{T+h}
=
\bar{\mathbf{p}}
+
\Delta_\theta(\mathbf{z}_{T+h}),
\end{equation}
while the ideal prediction, obtained by applying the same decoder to the true future latent state $\mathbf{z}^{*}_{T+h}$, is
\begin{equation}
\mathbf{p}_{T+h}
=
\bar{\mathbf{p}}
+
\Delta_\theta(\mathbf{z}^{*}_{T+h})
+
e_{\mathrm{dec},h},
\end{equation}
where $e_{\mathrm{dec},h} := \mathbf{p}_{T+h} - \bar{\mathbf{p}} - \Delta_\theta(\mathbf{z}^{*}_{T+h})$ is the decoder approximation error, since $\Delta_\theta$ is a learned MLP and not an exact inverse map, it need not perfectly reconstruct $\mathbf{p}_{T+h}$ even when supplied with the ideal latent state $\mathbf{z}^{*}_{T+h}$ directly. Unlike $\varepsilon$ and $\delta_h$, which arise upstream of the decoder (in the invariance of the lifting and operator), $e_{\mathrm{dec},h}$ is a property of $\Delta_\theta$ itself and is not controlled by the Koopman machinery, we therefore retain it as an explicit, separately-reported term rather than bounding it by the anchor alignment error.

Since the anchor term is shared by both expressions, it cancels out. Using the triangle inequality,
\begin{equation}
\begin{aligned}
\left\|
\hat{\mathbf{p}}_{T+h}
-
\mathbf{p}_{T+h}
\right\|
&\leq
\left\|
\Delta_\theta(\mathbf{z}_{T+h})
-
\Delta_\theta(\mathbf{z}^{*}_{T+h})
\right\|
+
\|e_{\mathrm{dec},h}\|.
\end{aligned}
\end{equation}

Assuming that $\Delta_\theta$ is Lipschitz continuous with constant $L$,
\begin{equation}
\left\|
\Delta_\theta(\mathbf{z}_{T+h})
-
\Delta_\theta(\mathbf{z}^{*}_{T+h})
\right\|
\leq
L
\left\|
\mathbf{z}_{T+h}
-
\mathbf{z}^{*}_{T+h}
\right\|,
\end{equation}
which gives the latent-to-pose error propagation term $L\delta_h .$ Combining the above results yields
\begin{equation}
\left\|
\hat{\mathbf{p}}_{T+h}
-
\mathbf{p}_{T+h}
\right\|
\leq
L
\frac{\kappa^h-1}{\kappa-1}\epsilon
+
L\delta_h
+
e_{\mathrm{dec},h}.
\end{equation}

Including the anchor alignment error $\|\bar{\mathbf{p}}-\mathbf{p}_{T}\|$ (zero under teacher forcing) as an additional, independent additive term gives the stated bound. We emphasize that $e_{\mathrm{dec},h}$ and $\|\bar{\mathbf{p}}-\mathbf{p}_{T}\|$ are logically distinct quantities: the former reflects the decoder's approximation quality given a perfect latent input, the latter reflects how far the anchor pose has drifted from the true current pose; conflating the two would understate the decoder contribution to the overall bound. When $\kappa=1+\eta$ and $\eta h$ is sufficiently small, the geometric accumulation term satisfies $(\kappa^h-1)/(\kappa-1)\lesssim h$. For the autoregressive lower bound in \eqref{eq:ar_bound}, each step feeds its own noisy prediction back as input, so errors accumulate additively at leading order with no latent-space regularisation to suppress $\varepsilon_\mathrm{ar}$.
\end{proof}

\begin{remark}
The condition $\kappa\le 1+\eta$ is an empirical property of the trained operator rather than an architectural guarantee. Unlike formulations that enforce a hard spectral constraint via projected gradient or power-iteration penalties, KOALA monitors $\|\mathbf{B}\|_F$ as a diagnostic and relies on the KAL loss and residual initialization to keep $\kappa$ near~$1$. This is sufficient in practice, as training curves show $\|\mathbf{B}\|_F$ remaining moderate throughout, but the bound in \eqref{eq:bound} should be understood as conditioned on this empirical regularity rather than as a hard guarantee.
\end{remark}

\section{Proof of Proposition~\ref{prop:KAL_necessary}} \label{app:KAL_proof}
\begin{proof}
\textbf{Step 1: Degenerate joint minimiser.}
Fix $\varepsilon > 0$. Define $\phi_\varepsilon(\tilde{f}) = \varepsilon\,\bar{\phi}(\tilde{f})$
where $\bar\phi$ is a fixed injective encoder, and let
$\phi^{-1}_\varepsilon$ be the corresponding left inverse scaled by
$\varepsilon^{-1}$: $\phi^{-1}_\varepsilon(\mathbf{z}) = \bar\phi^{-1}(\mathbf{z}/\varepsilon)$.
Both are realisable by the MLP lifting network.
Since $\mathcal{L}_\mathrm{KAL}$ contains the reconstruction term
$\|r_T - \tilde{f}_T\|^2 = \|\phi^{-1}(\phi(\tilde{f}_T)) - \tilde{f}_T\|^2$,
once $\mathcal{L}_\mathrm{KAL}$ is removed the only remaining
constraint on $\phi$ and $\phi^{-1}$ is that their composition
approximates the identity, which holds for any $\varepsilon$:
\[
  \phi^{-1}_\varepsilon(\phi_\varepsilon(\tilde{f}))
  = \bar\phi^{-1}(\bar\phi(\tilde{f})) \approx \tilde{f}.
\]
Hence $\phi$ and $\phi^{-1}$ co-adapt during training to jointly satisfy $\phi^{-1} \circ \phi \approx \mathrm{Id}$ for any encoder scale $\varepsilon$, because $\mathcal{L}_\mathrm{est}$ does not involve $\phi$ or $\phi^{-1}$ directly.

\textbf{Step 2: Prediction invariance to $\varepsilon$.}
Under $\phi_\varepsilon$, $\mathbf{z}_T = \varepsilon\,\bar\phi(\tilde{f}_T)$
and $\mathbf{K}_\mathrm{eff}^h\mathbf{z}_T = \varepsilon\,\mathbf{K}_\mathrm{eff}^h\bar\phi(\tilde{f}_T)$.
Since $\phi^{-1}_\varepsilon(\varepsilon\,\mathbf{u}) = \bar\phi^{-1}(\mathbf{u})$
for any $\mathbf{u}$, the prediction
\[
  \hat{\mathbf{p}}_{T+h}
  = \bar{\mathbf{p}} + \Delta_\theta\!\bigl(
      \bar\phi^{-1}(\mathbf{K}_\mathrm{eff}^h\bar\phi(\tilde{f}_T))\bigr)
\]
is independent of $\varepsilon$, so $\mathcal{L}_\mathrm{pred}$ provides no gradient opposing $\varepsilon \to 0$ and
$\|\mathbf{B}\|_F$ is unconstrained.

\textbf{Step 3: Gradient bound.} Since $\mathbf{K}_\mathrm{eff} = \mathbf{I} + \mathbf{B} + \gamma\mathbf{U}\mathbf{V}^\top$, $\partial\mathbf{K}_\mathrm{eff}/\partial\mathbf{B}$ is the identity super-operator. For a perturbation direction $\mathbf{E}$ with $\|\mathbf{E}\|_F = 1$, the product rule for matrix powers gives
\[
  D_{\mathbf{E}}\bigl[\mathbf{K}_\mathrm{eff}^h\mathbf{z}_T\bigr]
  = \sum_{k=0}^{h-1}
    \mathbf{K}_\mathrm{eff}^{\,h-1-k}\,
    \mathbf{E}\,
    \mathbf{K}_\mathrm{eff}^{\,k}\,\mathbf{z}_T.
\]
Sub-multiplicativity gives
\[
  \left\|D_{\mathbf{E}}\bigl[\mathbf{K}_\mathrm{eff}^h\mathbf{z}_T\bigr]\right\|
  \leq h\,\|\mathbf{K}_\mathrm{eff}\|_2^{h-1}\,\|\mathbf{z}_T\|.
\]
Composing with $\Delta_\theta$ ($L_\Delta$-Lipschitz, as it is a composition of Lipschitz MLPs) and applying the chain rule through $\mathcal{L}^{(h)}_\mathrm{pred} = \|\hat{\mathbf{p}}_{T+h} - \mathbf{p}_{T+h}\|^2$ yields the upper bound in equation~\eqref{eq:grad_explosion}. As $\|\mathbf{B}\|_F \to \infty$, sub-additivity gives $\|\mathbf{K}_\mathrm{eff}\|_2 \geq \|\mathbf{B}\|_2 - 1 - \gamma\|\mathbf{U}\|_2\|\mathbf{V}\|_2$, and since $\|\mathbf{B}\|_2 \geq \|\mathbf{B}\|_F/\sqrt{D_z} \to \infty$, this upper bound grows as $\mathcal{O}(\|\mathbf{K}_\mathrm{eff}\|_2^{h-1})$. Being derived from sub-multiplicativity, equation~\eqref{eq:grad_explosion} is an upper bound and need not reflect the magnitude of the true gradient; the substantive consequence is that no term in the remaining objective constrains $\|\mathbf{B}\|_F$ itself, so operator growth proceeds unopposed, with the rollout magnitudes $\|\mathbf{K}_\mathrm{eff}^h\mathbf{z}_T\|$ growing geometrically in~$h$ along the $20$-step operator chain.

\textbf{Step 4: $\mathcal{L}_\mathrm{KAL}$ prevents collapse.}
Under $\phi_\varepsilon$ with $\varepsilon \to 0$, for any fixed decoder $\phi^{-1}$ with bounded Lipschitz constant and any bounded $\mathbf{K}_\mathrm{eff}$, we have $\mathbf{K}_\mathrm{eff}^h\mathbf{z}_T \approx \mathbf{0}$, so $\phi^{-1}(\mathbf{K}_\mathrm{eff}^h\mathbf{z}_T) \approx \phi^{-1}(\mathbf{0})$, a constant independent of $h$. The anchored KAL target (equation~30) then satisfies $f^\mathrm{KAL}_{T+h} \approx \tilde{f}_T$ for all $h$, which cannot match $f^*_{T+h}$ for all $h \in \mathcal{H}$ simultaneously, since target features $f^*_{T+h}$ differ across horizons by an amount proportional to the subject's motion velocity. Hence $\mathcal{L}_\mathrm{KAL} > c > 0$ for some constant $c$ depending on minimum inter-frame feature variation, providing a non-vanishing gradient that prevents $\varepsilon \to 0$ and keeps $\|\mathbf{B}\|_F$ bounded.
\end{proof}

\section{Implementation Details} \label{sec:impl}
All models are trained on a single NVIDIA RTX~5090 GPU with the AdamW optimizer, batch size~8, and gradient clipping at~1.0 with the observation window $T=10$ frames. Multi-horizon predictions are evaluated at $\mathcal{H}=\{1,3,5,10,15,20\}$ frames, corresponding to $100$-$2000\,\mathrm{ms}$ at the MM-Fi sampling rate, with maximum horizon $\zeta=\max(\mathcal{H})=20$ frames. The model dimension is $d=128$, Koopman latent dimension $D_z=256$, and low-rank rank $r=16$. The CSI encoder uses $L_c=4$ Mamba layers; the temporal encoder uses $L_t=2$ Mamba layers; the HPE module uses $L_p=2$ temporal Mamba layers and $L_j=1$ per-joint temporal block. All Mamba blocks use state dimension~16 and convolution width~4. Multi-head attention in both the HPE module and skeleton encoder uses $n_\mathrm{head}=4$ heads with per-head dimension $d_h=d/n_\mathrm{head}=32$. Horizon weights in $\mathcal{L}_\mathrm{pred}$ are set to $w_h\in\{0.3,0.5,0.8,1.2,1.5,2.0\}$ for $h\in\{1,3,5,10,15,20\}$, emphasising long-horizon consistency. The residual dynamics matrix $\mathbf{B}$ is initialized with entries of scale $0.5/D_z$, and its Frobenius norm $\|\mathbf{B}\|_F$ is logged as a diagnostic at every epoch.

\section{Scheduled Sampling Protocol} \label{app:scheduled_sampling}
The skeleton-aware pose encoder receives a convex interpolation between ground-truth observed poses and stop-gradient HPE estimates:
\begin{equation}
  \text{pose\_input}_t
  \;=\;
  \alpha\,p_t \;+\; (1-\alpha)\,\mathrm{sg}(\tilde{p}_t),
  \label{eq:scheduled_sampling}
\end{equation}
where $\mathrm{sg}(\cdot)$ denotes stop-gradient and $\alpha \in [0,1]$ is the mix ratio. The schedule spans three phases over 20 main training epochs, which follow 8 dedicated HPE pretraining epochs:

\begin{table}[!ht]
\centering
\caption{Scheduled settings.}
\begin{tabular}{llll}
\toprule
Phase & Epochs & $\alpha$ & Skeleton encoder input \\
\midrule
Warmup      & 1-6   & $1.0$                        & Ground-truth $p_t$ \\
Decay       & 7-12  & $1 - (e-6)/6 \;\in (1,0)$   & Interpolation \\
End-to-end  & 13-20 & $0.0$                        & $\mathrm{sg}(\tilde{p}_t)$ \\
\bottomrule
\end{tabular}
\end{table}

At inference $\alpha=0$ throughout, matching the end-to-end phase. The HPE estimate $\tilde{p}_t$ is stop-gradiented before entering the skeleton encoder regardless of $\alpha$, so $\mathcal{L}_\mathrm{pred}$ never backpropagates through the HPE module, only $\mathcal{L}_\mathrm{est}$ trains the HPE parameters (Section~\ref{sec:training}).

\section{Qualitative Results} \label{qual}
Figures~\ref{qualit} and~\ref{ppose_combined} visualise two representative sequences from MM-Fi, showing input CSI spectrograms, ground-truth future poses (blue), and KOALA predictions (red) across horizons from $+100$\,ms to $+2000$\,ms.

In the first sequence in Figure~\ref{qualit}, the subject performs a relatively constrained motion involving gradual arm raising. At short horizons ($+100$\,ms to $+500$\,ms), KOALA predictions closely align with the ground truth in both global body configuration and individual joint positions. At longer horizons ($+1000$\,ms to $+2000$\,ms), minor deviations appear in distal joints such as the wrists and ankles, yet the overall body structure remains anatomically plausible with no structural collapse, confirming that the Koopman operator maintains coherent latent dynamics throughout the rollout.

The second sequence, as illustrated in Figure~\ref{ppose_combined} involves more dynamic motion with wider limb displacement and greater inter-frame variability, presenting a substantially harder prediction target. Despite this increased articulation complexity, KOALA accurately tracks the global pose trajectory at short horizons and continues to produce structurally coherent skeletons through to $+2000$\,ms. The predicted poses preserve key kinematic constraints such as joint connectivity and limb proportions, with no catastrophic drift or physically implausible configurations at any horizon.

\begin{figure}
    \centering
    \includegraphics[width=\linewidth]{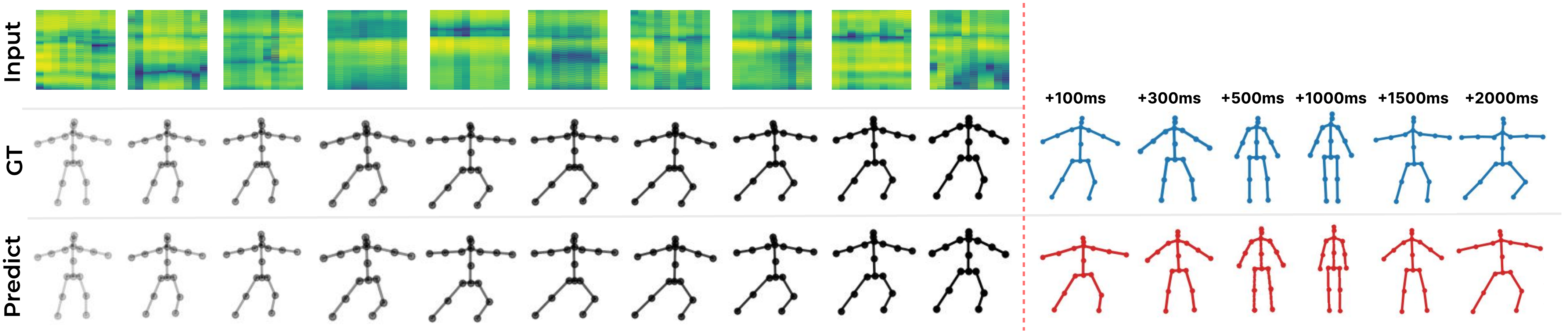}
    \caption{Additional qualitative results between predicted poses and GT for different time steps.}
    \label{ppose_combined}
\end{figure}

% \begin{figure}[!ht]
%     \centering
    
%     \begin{subfigure}{\linewidth}
%         \centering
%         \includegraphics[width=\linewidth]{ppose.pdf}
%         \caption{}
%         \label{fig:ppose_a}
%     \end{subfigure}
    
%     \vspace{0.5em}
    
%     \begin{subfigure}{\linewidth}
%         \centering
%         \includegraphics[width=\linewidth]{ppose2.pdf}
%         \caption{}
%         \label{fig:ppose_b}
%     \end{subfigure}
    
%     \caption{Qualitative results between predicted poses and Ground Truth (GT) for different time steps.}
%     \label{fig:ppose_combined}
% \end{figure}

\section{KOALA Efficiency Metrics}
To clarify the practical deployment cost of KOALA, we report its computational footprint. All measurements are on a single NVIDIA RTX 5090 with batch size 1, observation window \(T=10\), 6 prediction horizons.

\begin{table}[!ht]
\centering
\caption{KOALA efficiency metrics on MM-Fi and WiPose settings.}
\label{tab:efficiency}
\begin{tabular}{@{}l c c@{}}
\toprule
\textbf{Metric} & \textbf{MM-Fi} & \textbf{WiPose} \\
\midrule
Total Parameters & 4.8M & 4.5M \\
FLOPs & 0.083G & 0.083G \\
Inference Latency & 3.4 ms & 3.1 ms \\
GPU Memory (FP32) & 19 MB & 18 MB \\
GPU Memory (INT8) & 5 MB & 5 MB \\
FPS & $\sim$292 & $\sim$324 \\
\bottomrule
\end{tabular}
\end{table}

These figures indicate that KOALA operates well within real-time constraints for WiFi sensing (CSI sampling rate: 100-200 Hz, \textit{i.e.}, 10-20 frames per second). With 8-bit quantization, the model size reduces to 5 MB, making edge deployment feasible.

\section{Per-Joint Error Analysis}
\label{sec:per_joint_analysis}

Table~\ref{tab:per_joint} presents the MPJPE for individual joints on MM-Fi and WiPose. On MM-Fi, KOALA achieves the lowest errors on the pelvis and hip joints, while larger errors are observed for distal joints, particularly the elbows and wrists. These joints undergo larger motion amplitudes and are therefore more difficult to infer from WiFi CSI. A similar trend is observed on WiPose, where most torso and lower-body joints achieve relatively low errors (approximately 20-30 mm), whereas the wrists remain the most challenging body joints.

\begin{table*}[!ht]
\centering
\caption{Per-joint MPJPE (mm) of KOALA on MM-Fi and WiPose. Lower is better.}
\label{tab:per_joint}

\begin{minipage}{0.49\textwidth}
\centering
\subfloat[MM-Fi]{
\begin{tabular}{lc|lc}
\toprule
\textbf{Joint} & \textbf{MPJPE} &
\textbf{Joint} & \textbf{MPJPE} \\
\midrule
Pelvis      & 0.00  & R Shoulder & 51.13 \\
R Hip       & 16.88 & L Shoulder & 50.49 \\
L Hip       & 16.51 & R Ankle    & 52.97 \\
Spine       & 22.86 & L Ankle    & 54.21 \\
Thorax      & 41.93 & Head       & 53.04 \\
R Knee      & 42.59 & Neck       & 53.22 \\
L Knee      & 42.82 & R Elbow    & 77.07 \\
             &       & L Elbow    & 77.86 \\
             &       & R Wrist    & 115.77 \\
             &       & L Wrist    & 116.76 \\
\midrule
\multicolumn{3}{r}{\textbf{Mean}} &
\textbf{52.12} \\
\bottomrule
\end{tabular}
}
\end{minipage}
\hfill
\begin{minipage}{0.49\textwidth}
\centering
\subfloat[WiPose]{
\begin{tabular}{lc|lc}
\toprule
\textbf{Joint} & \textbf{MPJPE} &
\textbf{Joint} & \textbf{MPJPE} \\
\midrule
R Knee      & 16.42 & Nose      & 22.84 \\
L Knee      & 16.91 & R Ear     & 23.01 \\
R Ankle     & 17.63 & R Eye     & 23.32 \\
R Hip       & 18.58 & L Eye     & 24.46 \\
L Hip       & 19.30 & L Elbow   & 28.53 \\
Neck        & 19.56 & R Elbow   & 30.04 \\
L Ankle     & 20.70 & L Wrist   & 35.13 \\
R Shoulder  & 21.11 & R Wrist   & 40.02 \\
L Shoulder  & 21.62 & L Ear     & 70.40 \\
\midrule
\multicolumn{3}{r}{\textbf{Mean}} &
\textbf{26.14} \\
\bottomrule
\end{tabular}
}
\end{minipage}

\end{table*}

\section{Empirical Koopman Diagnostics} \label{app:koopman_diagnostics}
As discussed in Section~\ref{sec:koopman} and the note following Theorem~\ref{thm:bounded}, our use of ``Koopman'' describes the architectural motivation for the lifting-and-linear-rollout design rather than a proven theoretical property. We measure $\epsilon_h = \|\phi(\tilde{f}_{T+h}) - \mathbf{K}_\mathrm{eff}^h\,\phi(\tilde{f}_T)\|$ on held-out validation sequences, where $\phi(\tilde{f}_{T+h})$ is obtained by lifting the temporal-encoder feature computed from the ground-truth future pose sequence, and $\mathbf{K}_\mathrm{eff}^h\,\phi(\tilde{f}_T)$ is the operator's $h$-step rollout from the last observed latent state. We additionally report the relative defect $\epsilon_h / \|\phi(\tilde{f}_{T+h})\|$ to disambiguate genuine divergence from a simple scale effect of the latent space. Table~\ref{tab:invariance_defect} reports both across horizons.

\begin{table}[!ht]
\centering
\caption{Empirical Koopman invariance defect (absolute $\epsilon_h$, mean $\pm$ std, and relative to target norm) across prediction horizons.}
\label{tab:invariance_defect}
\begin{tabular}{@{}lcccccc@{}}
\toprule
& \textbf{100ms} & \textbf{300ms} & \textbf{500ms} & \textbf{1000ms} & \textbf{1500ms} & \textbf{2000ms} \\
\midrule
$\epsilon_h$ & 10.6 $\pm$ 1.7 & 40.3 $\pm$ 5.4 & 113.7 $\pm$ 17.6 & 1218.7 $\pm$ 250.1 & 12120.0 $\pm$ 3076.8 & 115888.3 $\pm$ 34070.9 \\
rel.\ $\epsilon_h$ & 0.68 $\pm$ 0.11 & 2.59 $\pm$ 0.36 & 7.30 $\pm$ 1.15 & 78.3 $\pm$ 16.2 & 778.7 $\pm$ 199.8 & 7444.4 $\pm$ 2206.6 \\
\bottomrule
\end{tabular}
\end{table}

The relative defect grows from $0.68\times$ at $h{=}1$ to $7444.4\times$ the scale of the true target at $h{=}20$, indicating that the latent rollout diverges from the ground-truth trajectory by several orders of magnitude relative to the target itself, not merely in absolute terms. We therefore do not claim that $\mathbf{K}_\mathrm{eff}$ achieves approximate Koopman invariance at long horizons.

\section{Ablations Study} \label{abla}
\subsection{Architecture Ablations.}
% Table~\ref{tab:ablation} reports ablation results on MM-Fi Protocol~1-S3. Removing the residual operator (replacing it with a dense matrix initialized near identity) increases MPJPE by 3-6\,mm across all horizons, confirming that the residual parametrization is necessary to escape the identity attractor and learn meaningful dynamics. Removing anchor-delta prediction (reverting to absolute pose output) produces a characteristic flat error profile (64.6-67.7\,mm across horizons), consistent with the model collapsing to a near-constant pose output rather than modeling motion, exactly the degenerate solution the anchor formulation is designed to prevent. Removing dual-stream fusion keeps short-horizon performance competitive (52.7\,mm at 100\,ms) but degrades rapidly at longer horizons (92.8\,mm at 1000\,ms), indicating that CSI features provide critical dynamic information beyond what skeletal coordinates alone can supply. Removing HPE pretraining causes severe degradation ($\sim$106\,mm throughout), confirming that the two-stage schedule is essential: without a warm initialization of the pose estimator, the skeleton encoder receives uninformative inputs from the outset, preventing the Koopman operator from forming a useful latent space.

Table~\ref{tab:ablation} reports ablation results on MM-Fi P1-S3. Removing the residual operator increases MPJPE by 3-6\,mm across all horizons, confirming that the residual parametrization is necessary to escape the identity attractor and learn meaningful dynamics. Removing anchor-delta prediction produces a characteristic flat error profile (64.6-67.7\,mm across horizons), consistent with the model collapsing to a near-constant pose output rather than modeling motion, exactly the degenerate solution the anchor formulation is designed to prevent. Removing dual-stream fusion keeps short-horizon performance competitive (52.7\,mm at 100\,ms) but degrades rapidly at longer horizons (92.8\,mm at 1000\,ms), indicating that CSI features provide critical dynamic information beyond what skeletal coordinates alone can supply. Removing HPE pretraining causes severe degradation ($\sim$106\,mm throughout), confirming that the two-stage schedule is essential: without a warm initialization of the pose estimator, the skeleton encoder receives uninformative inputs from the outset, preventing the Koopman operator from forming a useful latent space.

\begin{table}[!ht]
\centering
\caption{Ablation study results of KOALA across prediction horizons. Evaluated on MPJPE (mm), lower values indicate better performance.}
\label{tab:ablation}
% \setlength{\tabcolsep}{18 pt}
% \resizebox{\linewidth}{!}{
\begin{tabular}{@{}l cccccc@{}}
\toprule
\textbf{Variant} & \textbf{100ms} & \textbf{300ms} & \textbf{500ms} & \textbf{1000ms} & \textbf{1500ms} & \textbf{2000ms} \\
\midrule
\textit{w/o} Residual Operator       & 55.8 & 62.4 & 67.8 & 72.1 & 69.7 & 67.4 \\
\textit{w/o} Delta Prediction        & 67.7 & 64.8 & 64.6 & 65.2 & 65.7 & 66.5 \\
\textit{w/o} Dual Stream Fusion      & 52.7 & 67.2 & 80.2 & 92.8 & 90.1 & 78.8 \\
\textit{w/o} HPE Pretraining         & 105.4 & 106.1 & 106.7 & 107.7 & 107.3 & 106.7 \\
\midrule
\textbf{KOALA (Ours)}                & \textbf{52.1} & \textbf{57.0} & \textbf{61.9} & \textbf{65.7} & \textbf{63.4} & \textbf{61.9} \\
\bottomrule
\end{tabular}
% }
\end{table}

\subsection{Oracle Ablations.}
% Table~\ref{tab:oracle} quantifies the performance ceiling available to KOALA if the HPE module were replaced by ground-truth poses (oracle). On MM-Fi, the gap narrows substantially with horizon, from 26.8\,mm at 100\,ms to 7.1\,mm at 1500\,ms, indicating that the Koopman operator compensates for HPE noise more effectively as the prediction horizon grows and latent dynamics carry more of the burden relative to the initial pose estimate. On WiPose, the gap shrinks to just 5.2\,mm MPJPE and 1.7\,mm PA-MPJPE at 1000\,ms, suggesting the HPE module achieves sufficient accuracy on this dataset that cascaded noise is not a significant bottleneck at long horizons. Across both datasets, the average $\Delta$ of 12.1\,mm (MM-Fi) and 11.2\,mm (WiPose) represents the remaining headroom from end-to-end pose estimation quality, motivating future work on tighter integration of pose estimation and motion prediction.

% \subsection{Oracle Ablations.}
Table~\ref{tab:oracle_wipose} and Table~\ref{tab:oracle_mmfi} quantify the performance ceiling available to KOALA if the HPE module were replaced by ground-truth poses (oracle). On MM-Fi, the gap narrows substantially with horizon, from 26.8\,mm at 100\,ms to 7.1\,mm at 1500\,ms, indicating that the Koopman operator compensates for HPE noise more effectively as the prediction horizon grows and latent dynamics carry more of the burden relative to the initial pose estimate. On WiPose, the gap shrinks to just 5.2\,mm MPJPE and 1.7\,mm PA-MPJPE at 1000\,ms, suggesting the HPE module achieves sufficient accuracy on this dataset that cascaded noise is not a significant bottleneck at long horizons. Across both datasets, the average $\Delta$ of 12.1\,mm (MM-Fi) and 11.2\,mm (WiPose) represents the remaining headroom from end-to-end pose estimation quality, motivating future work on tighter integration of pose estimation and motion prediction.

\begin{table}[!ht]
\centering
\caption{Oracle upper bound comparison on WiPose dataset.}
\label{tab:oracle_wipose}
% \setlength{\tabcolsep}{18pt}
% \resizebox{\linewidth}{!}{
\begin{tabular}{ll|cccc|c}
\toprule
\textbf{Variant} & \textbf{Metric}
& \textbf{100ms} & \textbf{300ms} & \textbf{500ms} & \textbf{1000ms}
& \textbf{Avg} \\
\midrule
\multirow{2}{*}{Oracle}
& MPJPE $\downarrow$    &  8.9 & 13.2 & 16.4 & 22.1 & 15.2 \\
& PA-MPJPE $\downarrow$ & 10.3 & 13.8 & 16.1 & 19.6 & 15.0 \\
\midrule
\multirow{2}{*}{KOALA}
& MPJPE $\downarrow$    & 26.1 & 26.0 & 26.5 & 27.3 & 26.5 \\
& PA-MPJPE $\downarrow$ & 20.8 & 20.8 & 21.0 & 21.3 & 21.0 \\
\midrule
\multirow{2}{*}{$\Delta$ (Oracle $-$ Ours)}
& MPJPE $\downarrow$    & 17.2 & 12.8 &  9.6 &  5.2 & 11.2 \\
& PA-MPJPE $\downarrow$ & 10.5 &  7.0 &  5.1 &  1.7 &  6.1 \\
\bottomrule
\end{tabular}
% }
\end{table}

\begin{table}[!ht]
\centering
\caption{Oracle upper bound comparison on MM-Fi dataset.}
\label{tab:oracle_mmfi}
\resizebox{\linewidth}{!}{
\begin{tabular}{ll|cccccc|c}
\toprule
\textbf{Variant} & \textbf{Metric}
& \textbf{100ms} & \textbf{300ms} & \textbf{500ms}
& \textbf{1000ms} & \textbf{1500ms} & \textbf{2000ms}
& \textbf{Avg} \\
\midrule
\multirow{2}{*}{Oracle}
& MPJPE $\downarrow$    & 25.3 & 42.9 & 52.5 & 58.4 & 56.3 & 53.9 & 48.2 \\
& PA-MPJPE $\downarrow$ & 22.0 & 37.0 & 44.5 & 47.1 & 46.2 & 45.1 & 40.3 \\
\midrule
\multirow{2}{*}{KOALA}
& MPJPE $\downarrow$    & 52.1 & 57.0 & 61.9 & 65.7 & 63.4 & 61.9 & 60.3 \\
& PA-MPJPE $\downarrow$ & 43.5 & 47.0 & 50.5 & 52.8 & 51.8 & 51.1 & 49.5 \\
\midrule
\multirow{2}{*}{$\Delta$ (Oracle $-$ Ours)}
& MPJPE $\downarrow$    & 26.8 & 14.1 &  9.4 &  7.3 &  7.1 &  8.0 & 12.1 \\
& PA-MPJPE $\downarrow$ & 21.5 & 10.0 &  6.0 &  5.7 &  5.6 &  6.0 &  9.2 \\
\bottomrule
\end{tabular}
}
\end{table}

\begin{table}[!ht]
\centering
\caption{Ablation study of KOALA under different hyperparameter settings. MPJPE / PA-MPJPE (mm), lower is better.}
\label{tab:ablation_settings}
% \small
% \setlength{\tabcolsep}{3pt}
% \renewcommand{\arraystretch}{1.15}
\resizebox{\linewidth}{!}{
\begin{tabular}{@{}lcccccc@{}}
\toprule
\textbf{Variant} &
\textbf{100ms} & \textbf{300ms} & \textbf{500ms} &
\textbf{1000ms} & \textbf{1500ms} & \textbf{2000ms} \\
\midrule

% ============================================================
\multicolumn{7}{l}{\textit{(A) Model dimension $d$}} \\
$d=64$
& 54.16 / 45.75 & 58.74 / 48.94 & 63.32 / 52.19
& 67.55 / 54.72 & 65.28 / 53.82 & 63.59 / 52.94 \\
$d=256$
& 52.95 / 44.12 & 56.54 / 46.48 & 60.49 / 49.30
& 64.24 / 51.82 & 62.41 / 50.84 & 61.31 / 50.40 \\

\midrule
% ============================================================
\multicolumn{7}{l}{\textit{(B) Latent dimension $D_z$}} \\
$D_z=64$
& 53.75 / 45.31 & 61.14 / 51.39 & 67.85 / 56.91
& 74.60 / 62.82 & 72.32 / 62.44 & 68.60 / 59.23 \\
$D_z=128$
& 54.72 / 45.94 & 59.86 / 49.56 & 64.83 / 52.93
& 69.13 / 55.67 & 66.83 / 54.80 & 64.89 / 53.78 \\

\midrule
% ============================================================
\multicolumn{7}{l}{\textit{(C) Koopman rank $r$}} \\
$r=8$
& 55.53 / 46.36 & 59.23 / 48.90 & 63.66 / 52.10
& 67.83 / 55.08 & 65.75 / 53.96 & 64.44 / 53.46 \\
$r=32$
& 53.23 / 44.87 & 60.79 / 50.93 & 67.59 / 56.40
& 74.08 / 61.29 & 71.23 / 60.27 & 68.05 / 58.07 \\

\midrule
% ============================================================
\multicolumn{7}{l}{\textit{(D) Encoder depth $L_c$}} \\
$L_c=2$
& 52.76 / 44.01 & 58.00 / 47.89 & 62.58 / 51.15
& 66.58 / 53.68 & 64.11 / 52.39 & 62.44 / 51.51 \\
$L_c=6$
& 54.44 / 45.89 & 61.21 / 51.46 & 66.77 / 55.79
& 71.07 / 58.66 & 68.68 / 57.90 & 67.04 / 56.86 \\
\midrule
\textbf{KOALA (Ours)}
& \textbf{52.10 / 43.50} & \textbf{57.00 / 47.00} & \textbf{61.90 / 50.50} & \textbf{65.70 / 52.80} & \textbf{63.40 / 51.80} & \textbf{61.90 / 51.10} \\

\bottomrule
\end{tabular}
}
\end{table}

\subsection{Hyperparameter Settings.}
Table~\ref{tab:ablation_settings} outlines sensitivity to key hyperparameters on Protocol~P1, Setting~S3. For model dimension, reducing to $d=64$ increases MPJPE by 2-2.5\,mm while $d=256$ yields marginal gains below 1\,mm at roughly double the parameter count, confirming $d=128$ as the optimal trade-off. Latent dimension has the strongest effect: $D_z=64$ reaches 74.6\,mm at 1000\,ms versus 65.7\,mm for the default $D_z=256$, confirming that a sufficiently large Koopman space is necessary to disentangle nonlinear motion modes, with $D_z=128$ showing intermediate degradation. For Koopman rank, both $r=8$ and $r=32$ underperform the default $r=16$: the former lacks capacity for CSI modulation while the latter over-parameterizes the adaptation and destabilizes long-horizon rollouts (74.1\,mm at 1000\,ms), identifying $r=16$ as a sweet spot. Encoder depth is the least sensitive axis: $L_c=2$ degrades slightly at long horizons while $L_c=6$ increases error across all horizons, likely from overfitting. Overall, KOALA is most sensitive to latent dimension and rank, and robust to moderate variation in model dimension and encoder depth.

% Requires: booktabs, multirow, xcolor (or colortbl for \rowcolor), siunitx (optional)
% Add to preamble: \usepackage{booktabs, multirow, xcolor}
% \definecolor{bestcell}{RGB}{235,246,235}  % light green highlight for best values

\subsection{Loss Ablations.}\label{sec:loss_ablation}
Table~\ref{tab:loss_ablation} isolates the contribution of $\mathcal{L}_{\text{est}}$ and $\mathcal{L}_{\text{pred}}$. Removing $\mathcal{L}_{\text{est}}$ produces a flat error profile, MPJPE varies by less than 0.2\,mm across all horizons (132.75-132.92\,mm). Without direct pose supervision, the HPE module yields uninformative estimates, so the Koopman operator has no meaningful pose-space signal to lift and instead propagates a near-static latent state.

Removing $\mathcal{L}_{\text{pred}}$ instead produces horizon-varying predictions (141.09\,mm at 100\,ms rising to 168.06\,mm at 1000\,ms), showing that $\mathcal{L}_{\text{KAL}}$ alone induces non-degenerate dynamics. However, absolute accuracy remains far below the full model (PCK@10 of $\sim$13\% vs.\ 65-74\%), indicating that feature-space consistency alone is insufficient to calibrate predictions in pose space.

\begin{table}[!ht]
\centering
\caption{Loss ablation study of KOALA. MPJPE / PA-MPJPE (mm), lower is better.}
\resizebox{\linewidth}{!}{
\begin{tabular}{lcccccc}
\toprule
Variant & 100ms & 300ms & 500ms & 1000ms & 1500ms & 2000ms \\
\midrule
w/o $\mathcal{L}_{\text{est}}$ & 132.7 / 116.3 & 132.8 / 116.1 & 132.8 / 116.0 & 132.8 / 116.0 & 132.8 / 116.0 & 132.9 / 116.1 \\
w/o $\mathcal{L}_{\text{pred}}$ & 141.0 / 94.2 & 150.3 / 102.6 & 159.1 / 110.1 & 168.0 / 117.0  & 166.4 / 117.5 & 158.0 / 110.0 \\
KOALA (Ours) & \textbf{52.1} / \textbf{43.5} & \textbf{57.0} / \textbf{47.0} & \textbf{61.9} / \textbf{50.5} & \textbf{65.7} / \textbf{52.8} & \textbf{63.4} / \textbf{51.8} & \textbf{61.9} / \textbf{51.1} \\
\bottomrule
\end{tabular}
}
\label{tab:loss_ablation}
\end{table}

\subsection{Zero-Velocity Baseline.}
To directly address whether KOALA relatively flat error profile (Table~\ref{tab:mmfi}) reflects genuine temporal modeling rather than slow dataset motion or heavy reliance on the anchor pose, we evaluate a zero-velocity baseline that predicts $\hat{\mathbf{p}}_{T+h} = \bar{\mathbf{p}}$ for every horizon $h$, using the same CSI-estimated anchor pose KOALA uses at inference, with no learned delta whatsoever.

Table~\ref{tab:zero_velocity} outlined the results of the zero-velocity baseline grows substantially with horizon (56.3mm at 100ms to 91.9mm at 1000ms, a spread of 22.6mm between $h{=}1$ and $h{=}20$), confirming that the underlying motion in this dataset is \emph{not} negligible over the 2-second prediction window. KOALA own spread over the same range is only 9.8mm (52.1mm to 61.9mm), less than half that of the zero-velocity baseline, while also achieving substantially lower absolute error at every horizon (\textit{e.g.}, 65.7mm vs.\ 91.9mm at 1000ms). If KOALA predictions were dominated by the anchor pose with only a negligible learned correction, its error growth would track the zero-velocity baseline growth closely, instead, the learned delta measurably suppresses the drift that copying the anchor alone already exhibits, indicating genuine temporal modeling beyond anchor reliance.

\begin{table}[!ht]
\centering
\caption{Zero-velocity baseline ($\hat{\mathbf{p}}_{T+h}=\bar{\mathbf{p}}$, no learned delta) vs.\ KOALA.}
\label{tab:zero_velocity}
\begin{tabular}{@{}lccccccc@{}}
\toprule
\textbf{Variant} & \textbf{100ms} & \textbf{300ms} & \textbf{500ms} & \textbf{1000ms} & \textbf{1500ms} & \textbf{2000ms} & \textbf{Spread} \\
\midrule
Zero-Velocity & 56.3 & 68.9 & 80.4 & 91.9 & 89.6 & 78.9 & 22.6 \\
\textbf{KOALA (Ours)} & \textbf{52.1} & \textbf{57.0} & \textbf{61.9} & \textbf{65.7} & \textbf{63.4} & \textbf{61.9} & \textbf{9.8} \\
\bottomrule
\end{tabular}
\end{table}

\subsection{Predictor Ablations.}
Table~\ref{tab:ablation_transformer} compares KOALA Koopman-inspired lifting and rollout against three alternatives operating on the same fused CSI-skeleton features, with classic Dynamic Mode Decomposition (DMD)~\citep{schmid2010dynamic}, the Linear Recurrent Unit (LRU)~\citep{orvieto2023resurrecting}, and a Transformer predictor~\citep{vaswani2017attention}. KOALA outperforms all three at every horizon. DMD performs worst overall, degrading sharply with horizon (61.1mm at 100ms to 97.2mm at 1000ms), consistent with it fitting a single global linear operator without CSI-conditioning on a representation not optimized for it. LRU is the strongest baseline and shows a notably flat error profile across horizons (54.1-61.3mm), reflecting its built-in stability guarantee, but still trails KOALA by 2-8mm at every horizon. The Transformer degrades non-monotonically and substantially at mid-to-long horizons (up to 93.8mm), showing that nonlinear per-horizon prediction without a shared operator struggles to maintain consistency across horizons.

\begin{table}[!ht]
\centering
\caption{Comparison across predictor formulations.}
\label{tab:ablation_transformer}
\setlength{\tabcolsep}{5pt}
\begin{tabular*}{\linewidth}{@{\extracolsep{\fill}} ll rrrrrr @{}}
\toprule
\textbf{Variant}
  & \textbf{Metric}
  & \textbf{100\,ms}
  & \textbf{300\,ms}
  & \textbf{500\,ms}
  & \textbf{1000\,ms}
  & \textbf{1500\,ms}
  & \textbf{2000\,ms} \\
\midrule
\multirow{4}{*}{\shortstack[l]{DMD \\ \citep{schmid2010dynamic}}}
  & MPJPE $\downarrow$      & 61.1  & 72.3  & 84.4  & 97.2  & 95.6  & 86.8 \\
  & PA-MPJPE $\downarrow$   & 51.6  & 62.2  & 72.7  & 83.3  & 83.9  & 75.7 \\
  & PCK@20 $\uparrow$       & 88.4  & 84.4  & 80.7  & 77.2  & 77.9  & 80.0 \\
  & PCK@10 $\uparrow$       & 68.9  & 65.4  & 62.2  & 59.3  & 59.9  & 61.0 \\
\midrule
\multirow{4}{*}{\shortstack[l]{LRU \\ \citep{orvieto2023resurrecting}}}
  & MPJPE $\downarrow$      & 54.1  & 58.4  & 60.8  & 61.3  & 59.2  & 59.0 \\
  & PA-MPJPE $\downarrow$   & 45.4  & 49.4  & 51.5  & 51.3  & 49.6  & 49.6 \\
  & PCK@20 $\uparrow$       & 91.0  & 89.3  & 88.3  & 88.0  & 88.7  & 88.9 \\
  & PCK@10 $\uparrow$       & 72.6  & 70.4  & 68.8  & 68.1  & 69.3  & 69.6 \\
\midrule
\multirow{4}{*}{\shortstack[l]{Transformer \\ \citep{vaswani2017attention}}}
  & MPJPE $\downarrow$      & 59.6  & 71.6  & 82.5  & 93.8  & 91.5  & 80.8 \\
  & PA-MPJPE $\downarrow$   & 50.3  & 61.5  & 71.2  & 80.5  & 80.5  & 70.4 \\
  & PCK@20 $\uparrow$       & 88.9  & 84.6  & 81.3  & 78.1  & 79.3  & 82.3 \\
  & PCK@10 $\uparrow$       & 69.8  & 65.9  & 63.1  & 60.7  & 61.6  & 64.0 \\
\midrule
\multirow{4}{*}{\textbf{KOALA (Ours)}}
  & MPJPE $\downarrow$      & \textbf{52.1}  & \textbf{57.0}  & \textbf{61.9}  & \textbf{65.7}  & \textbf{63.4}  & \textbf{61.9} \\
  & PA-MPJPE $\downarrow$   & \textbf{43.5}  & \textbf{47.0}  & \textbf{50.5}  & \textbf{52.8}  & \textbf{51.8}  & \textbf{51.1} \\
  & PCK@20 $\uparrow$       & \textbf{91.8}  & \textbf{89.9}  & \textbf{87.8}  & \textbf{85.9}  & \textbf{86.9}  & \textbf{87.9} \\
  & PCK@10 $\uparrow$       & \textbf{73.9}  & \textbf{70.7}  & \textbf{67.8}  & \textbf{65.1}  & \textbf{66.5}  & \textbf{67.7} \\
\bottomrule
\end{tabular*}
\end{table}

% \end{document}

% \subsubsection*{Author Contributions}
% If you'd like to, you may include a section for author contributions as is done
% in many journals. This is optional and at the discretion of the authors. Only add
% this information once your submission is accepted and deanonymized. 

% \subsubsection*{Acknowledgments}
% Use unnumbered third level headings for the acknowledgments. All
% acknowledgments, including those to funding agencies, go at the end of the paper.
% Only add this information once your submission is accepted and deanonymized. 

% \bibliography{tmlr}
% \bibliographystyle{tmlr}

% \appendix
% \section{Appendix}
% You may include other additional sections here.

\end{document}

%% file: math_commands.tex
\usepackage{amsmath,amsfonts,bm}

\def\eqref#1{equation~\ref{#1}}
\def\1{\bm{1}}

\DeclareMathAlphabet{\mathsfit}{\encodingdefault}{\sfdefault}{m}{sl}
\SetMathAlphabet{\mathsfit}{bold}{\encodingdefault}{\sfdefault}{bx}{n}

%% file: camera.bbl
\begin{thebibliography}{36}
\providecommand{\natexlab}[1]{#1}
\providecommand{\url}[1]{\texttt{#1}}
\expandafter\ifx\csname urlstyle\endcsname\relax
  \providecommand{\doi}[1]{doi: #1}\else
  \providecommand{\doi}{doi: \begingroup \urlstyle{rm}\Url}\fi

\bibitem[792(2017)]{7920364}
Ieee standard for information technology--telecommunications and information exchange between systems - local and metropolitan area networks--specific requirements - part 11: Wireless lan medium access control (mac) and physical layer (phy) specifications amendment 2: Sub 1 ghz license exempt operation.
\newblock \emph{IEEE Std 802.11ah-2016 (Amendment to IEEE Std 802.11-2016, as amended by IEEE Std 802.11ai-2016)}, pp.\  1--594, 2017.

\bibitem[Brunton \& Kutz(2019)Brunton and Kutz]{brunton2019data}
Steven~L. Brunton and J.~Nathan Kutz.
\newblock \emph{Data-Driven Science and Engineering: Machine Learning, Dynamical Systems, and Control}.
\newblock Cambridge University Press, 2019.

\bibitem[Chen et~al.(2026)Chen, Qu, Tang, and Slock]{chen2026graph}
Jichao Chen, YangYang Qu, Ruibo Tang, and Dirk Slock.
\newblock Graph-based 3d human pose estimation using wifi signals.
\newblock In \emph{ICASSP 2026-2026 IEEE International Conference on Acoustics, Speech and Signal Processing (ICASSP)}, pp.\  19992--19996. IEEE, 2026.

\bibitem[Chen \& Guo(2026)Chen and Guo]{dtpose}
Yang Chen and Jingcai Guo.
\newblock Dt-pose: Towards robust and realistic human pose estimation using wifi signals.
\newblock \emph{Edge Intelligence and Systems}, 1\penalty0 (1):\penalty0 2, 2026.

\bibitem[Chen et~al.(2023)Chen, Huang, Pang, Jiang-Lin, Kuo, Shuai, and Cheng]{chen2023}
Yi-Chung Chen, Zhi-Kai Huang, Lu~Pang, Jian-Yu Jiang-Lin, Chia-Han Kuo, Hong-Han Shuai, and Wen-Huang Cheng.
\newblock Seeing the unseen: Wifi-based 2d human pose estimation via an evolving attentive spatial-frequency network.
\newblock \emph{Pattern Recognition Letters}, 171:\penalty0 21--27, 2023.
\newblock ISSN 0167-8655.

\bibitem[D.~Gian et~al.(2024)D.~Gian, Dac~Lai, Van~Luong, Wong, and Nguyen]{10.1007/978-3-031-72751-1_6}
Toan D.~Gian, Tien Dac~Lai, Thien Van~Luong, Kok-Seng Wong, and Van-Dinh Nguyen.
\newblock Hpe-li: Wifi-enabled lightweight dual selective kernel convolution for human pose estimation.
\newblock In \emph{Computer Vision – ECCV 2024: 18th European Conference, Milan, Italy, September 29–October 4, 2024, Proceedings, Part XXXI}, pp.\  93–111, Berlin, Heidelberg, 2024. Springer-Verlag.
\newblock ISBN 978-3-031-72750-4.

\bibitem[Dang et~al.(2021)Dang, Nie, Long, Zhang, and Li]{dang2021msr}
Lingwei Dang, Yongwei Nie, Chengjiang Long, Qing Zhang, and Guiqing Li.
\newblock Msr-gcn: Multi-scale residual graph convolution networks for human motion prediction.
\newblock In \emph{2021 IEEE/CVF International Conference on Computer Vision (ICCV)}, pp.\  11447--11456, 2021.

\bibitem[Gu \& Dao(2024)Gu and Dao]{gu2023mamba}
Albert Gu and Tri Dao.
\newblock Mamba: Linear-time sequence modeling with selective state spaces.
\newblock In \emph{First Conference on Language Modeling (COLM)}, 2024.

\bibitem[Guo et~al.(2023)Guo, Du, Shen, Lepetit, Alameda-Pineda, and Moreno-Noguer]{guo2023simlpe}
Wen Guo, Yuming Du, Xi~Shen, Vincent Lepetit, Xavier Alameda-Pineda, and Francesc Moreno-Noguer.
\newblock Back to mlp: A simple baseline for human motion prediction.
\newblock In \emph{Proceedings of the IEEE/CVF Winter Conference on Applications of Computer Vision (WACV)}, pp.\  4798--4808, 01 2023.

\bibitem[Huang et~al.(2024)Huang, Li, You, Chen, Lin, Liu, Li, and McCann]{10.1007/978-3-031-72946-1_5}
Shuokang Huang, Kaihan Li, Di~You, Yichong Chen, Arvin Lin, Siying Liu, Xiaohui Li, and Julie~A. McCann.
\newblock Wimans: A benchmark dataset for wifi-based multi-user activity sensing.
\newblock In \emph{Computer Vision – ECCV 2024: 18th European Conference, Milan, Italy, September 29–October 4, 2024, Proceedings, Part XLII}, pp.\  72–91, Berlin, Heidelberg, 2024. Springer-Verlag.
\newblock ISBN 978-3-031-72945-4.

\bibitem[Jain et~al.(2016)Jain, Zamir, Savarese, and Saxena]{jain2016structural}
Ashesh Jain, Amir~R. Zamir, Silvio Savarese, and Ashutosh Saxena.
\newblock Structural-rnn: Deep learning on spatio-temporal graphs.
\newblock In \emph{2016 IEEE Conference on Computer Vision and Pattern Recognition (CVPR)}, pp.\  5308--5317, 2016.

\bibitem[Jiang et~al.(2020)Jiang, Xue, Miao, Wang, Lin, Tian, Murali, Hu, Sun, and Su]{jiang2020}
Wenjun Jiang, Hongfei Xue, Chenglin Miao, Shiyang Wang, Sen Lin, Chong Tian, Srinivasan Murali, Haochen Hu, Zhi Sun, and Lu~Su.
\newblock Towards 3d human pose construction using wifi.
\newblock In \emph{Proceedings of the 26th Annual International Conference on Mobile Computing and Networking}, MobiCom '20, New York, NY, USA, 2020. Association for Computing Machinery.
\newblock ISBN 9781450370851.

\bibitem[Koopman(1931)]{koopman1931hamiltonian}
B.~O. Koopman.
\newblock Hamiltonian systems and transformation in hilbert space.
\newblock \emph{Proceedings of the National Academy of Sciences}, 17\penalty0 (5):\penalty0 315--318, 1931.

\bibitem[Li et~al.(2021)Li, Chen, Zhao, Zhang, Wang, and Tian]{li2021multiscale}
Maosen Li, Siheng Chen, Yangheng Zhao, Ya~Zhang, Yanfeng Wang, and Qi~Tian.
\newblock Multiscale spatio-temporal graph neural networks for 3d skeleton-based motion prediction.
\newblock \emph{IEEE Transactions on Image Processing}, 30:\penalty0 7760--7775, 2021.

\bibitem[Lusch et~al.(2018)Lusch, Wehmeyer, and Brunton]{lusch2018deep}
Bethany Lusch, J~Nathan Wehmeyer, and Steven~L Brunton.
\newblock Deep learning for universal linear embeddings of nonlinear dynamics.
\newblock \emph{Nature Communications}, 9:\penalty0 4950, 2018.

\bibitem[Lyu et~al.(2022)Lyu, Chen, Liu, Zhang, and Wang]{liu2022survey3d}
Kedi Lyu, Haipeng Chen, Zhenguang Liu, Beiqi Zhang, and Ruili Wang.
\newblock 3d human motion prediction: A survey.
\newblock \emph{Neurocomputing}, 489:\penalty0 345--365, 2022.
\newblock ISSN 0925-2312.

\bibitem[Mao et~al.(2019)Mao, Liu, Salzmann, and Li]{mao2019learning}
Wei Mao, Miaomiao Liu, Mathieu Salzmann, and Hongdong Li.
\newblock Learning trajectory dependencies for human motion prediction.
\newblock In \emph{Proceedings of the IEEE/CVF International Conference on Computer Vision (ICCV)}, pp.\  9489--9497, 2019.

\bibitem[Mao et~al.(2020)Mao, Liu, and Salzmann]{mao2020}
Wei Mao, Miaomiao Liu, and Mathieu Salzmann.
\newblock History repeats itself: Human motion prediction via motion attention.
\newblock In \emph{Computer Vision – ECCV 2020: 16th European Conference, Glasgow, UK, August 23–28, 2020, Proceedings, Part XIV}, pp.\  474–489, Berlin, Heidelberg, 2020. Springer-Verlag.
\newblock ISBN 978-3-030-58567-9.

\bibitem[Martinez et~al.(2017)Martinez, Black, and Romero]{martinez2017human}
Julieta Martinez, Michael~J. Black, and Javier Romero.
\newblock On human motion prediction using recurrent neural networks.
\newblock In \emph{2017 IEEE Conference on Computer Vision and Pattern Recognition (CVPR)}, pp.\  4674--4683, 2017.

\bibitem[Orvieto et~al.(2023)Orvieto, Smith, Gu, Fernando, Gulcehre, Pascanu, and De]{orvieto2023resurrecting}
Antonio Orvieto, Samuel~L Smith, Albert Gu, Anushan Fernando, Caglar Gulcehre, Razvan Pascanu, and Soham De.
\newblock Resurrecting recurrent neural networks for long sequences.
\newblock In \emph{Proceedings of the 40th International Conference on Machine Learning}, ICML'23. JMLR.org, 2023.

\bibitem[Schmid(2010)]{schmid2010dynamic}
Peter~J. Schmid.
\newblock Dynamic mode decomposition of numerical and experimental data.
\newblock \emph{Journal of Fluid Mechanics}, 656:\penalty0 5–28, 2010.

\bibitem[Shi et~al.(2015)Shi, Chen, Wang, Yeung, Wong, and Woo]{10.5555/2969239.2969329}
Xingjian Shi, Zhourong Chen, Hao Wang, Dit-Yan Yeung, Wai-kin Wong, and Wang-chun Woo.
\newblock Convolutional lstm network: a machine learning approach for precipitation nowcasting.
\newblock In \emph{Proceedings of the 29th International Conference on Neural Information Processing Systems - Volume 1}, NIPS'15, pp.\  802–810, Cambridge, MA, USA, 2015. MIT Press.

\bibitem[Strässer et~al.(2023)Strässer, Berberich, and Allgöwer]{strasser2024error}
Robin Strässer, Julian Berberich, and Frank Allgöwer.
\newblock Robust data-driven control for nonlinear systems using the koopman operator*.
\newblock \emph{IFAC-PapersOnLine}, 56\penalty0 (2):\penalty0 2257--2262, 2023.
\newblock ISSN 2405-8963.
\newblock 22nd IFAC World Congress.

\bibitem[Tang et~al.(2023)Tang, Li, Zhang, and Tang]{10376574}
Song Tang, Chuang Li, Pu~Zhang, and RongNian Tang.
\newblock { SwinLSTM: Improving Spatiotemporal Prediction Accuracy using Swin Transformer and LSTM}.
\newblock In \emph{2023 IEEE/CVF International Conference on Computer Vision (ICCV)}, pp.\  13424--13433, Los Alamitos, CA, USA, October 2023. IEEE Computer Society.

\bibitem[Tang et~al.(2024)Tang, Dong, Tang, Chu, and Liang]{10677992}
Yujin Tang, Peijie Dong, Zhenheng Tang, Xiaowen Chu, and Junwei Liang.
\newblock { VMRNN: Integrating Vision Mamba and LSTM for Efficient and Accurate Spatiotemporal Forecasting}.
\newblock In \emph{2024 IEEE/CVF Conference on Computer Vision and Pattern Recognition Workshops (CVPRW)}, pp.\  5663--5673, Los Alamitos, CA, USA, June 2024. IEEE Computer Society.

\bibitem[Vaswani et~al.(2017)Vaswani, Shazeer, Parmar, Uszkoreit, Jones, Gomez, Kaiser, and Polosukhin]{vaswani2017attention}
Ashish Vaswani, Noam Shazeer, Niki Parmar, Jakob Uszkoreit, Llion Jones, Aidan~N Gomez, \L~ukasz Kaiser, and Illia Polosukhin.
\newblock Attention is all you need.
\newblock In I.~Guyon, U.~Von Luxburg, S.~Bengio, H.~Wallach, R.~Fergus, S.~Vishwanathan, and R.~Garnett (eds.), \emph{Advances in Neural Information Processing Systems}, volume~30. Curran Associates, Inc., 2017.

\bibitem[Wang et~al.(2015)Wang, Liu, Shahzad, Ling, and Lu]{10.1145/2789168.2790093}
Wei Wang, Alex~X. Liu, Muhammad Shahzad, Kang Ling, and Sanglu Lu.
\newblock Understanding and modeling of wifi signal based human activity recognition.
\newblock In \emph{Proceedings of the 21st Annual International Conference on Mobile Computing and Networking}, MobiCom '15, pp.\  65–76, New York, NY, USA, 2015. Association for Computing Machinery.
\newblock ISBN 9781450336192.

\bibitem[Wang et~al.(2019)Wang, Zhang, Zhu, Long, Wang, and Yu]{8953605}
Yunbo Wang, Jianjin Zhang, Hongyu Zhu, Mingsheng Long, Jianmin Wang, and Philip~S. Yu.
\newblock Memory in memory: A predictive neural network for learning higher-order non-stationarity from spatiotemporal dynamics.
\newblock In \emph{2019 IEEE/CVF Conference on Computer Vision and Pattern Recognition (CVPR)}, pp.\  9146--9154, 2019.

\bibitem[Xu et~al.(2023)Xu, Tan, Tan, Chen, Wang, and Wang]{xu2023auxformer}
Chenxin Xu, Robby~T. Tan, Yuhong Tan, Siheng Chen, Xinchao Wang, and Yanfeng Wang.
\newblock Auxiliary tasks benefit 3d skeleton-based human motion prediction.
\newblock In \emph{2023 IEEE/CVF International Conference on Computer Vision (ICCV)}, pp.\  9475--9486, 2023.

\bibitem[Yan et~al.(2024)Yan, Wang, Qian, Ding, Han, and Wei]{yan2024}
Kangwei Yan, Fei Wang, Bo~Qian, Han Ding, Jinsong Han, and Xing Wei.
\newblock Person-in-wifi 3d: End-to-end multi-person 3d pose estimation with wi-fi.
\newblock In \emph{2024 IEEE/CVF Conference on Computer Vision and Pattern Recognition (CVPR)}, pp.\  969--978, 2024.

\bibitem[Yang et~al.(2022)Yang, Zhou, Huang, Zou, and Xie]{Yang2022MetaFiDP}
Jianfei Yang, Yunjiao Zhou, He~Huang, Han Zou, and Lihua Xie.
\newblock Metafi: Device-free pose estimation via commodity wifi for metaverse avatar simulation.
\newblock \emph{2022 IEEE 8th World Forum on Internet of Things (WF-IoT)}, pp.\  1--6, 2022.

\bibitem[Yang et~al.(2023{\natexlab{a}})Yang, Chen, Zou, Lu, Wang, Sun, and Xie]{YANG2023100703}
Jianfei Yang, Xinyan Chen, Han Zou, Chris~Xiaoxuan Lu, Dazhuo Wang, Sumei Sun, and Lihua Xie.
\newblock Sensefi: A library and benchmark on deep-learning-empowered wifi human sensing.
\newblock \emph{Patterns}, 4\penalty0 (3):\penalty0 100703, 2023{\natexlab{a}}.
\newblock ISSN 2666-3899.

\bibitem[Yang et~al.(2023{\natexlab{b}})Yang, Huang, Zhou, Chen, Xu, Yuan, Zou, Lu, and Xie]{yang2023mmfi}
Jianfei Yang, He~Huang, Yunjiao Zhou, Xinyan Chen, Yuecong Xu, Shenghai Yuan, Han Zou, Chris~Xiaoxuan Lu, and Lihua Xie.
\newblock Mm-fi: multi-modal non-intrusive 4d human dataset for versatile wireless sensing.
\newblock \emph{Proceedings of the 37th International Conference on Neural Information Processing Systems}, 2023{\natexlab{b}}.

\bibitem[Zhou et~al.(2022)Zhou, Zhu, Xu, Hu, and Li]{9889024}
Yue Zhou, Aichun Zhu, Caojie Xu, Fangqiang Hu, and Yifeng Li.
\newblock Perunet: Deep signal channel attention in unet for wifi-based human pose estimation.
\newblock \emph{IEEE Sensors Journal}, 22\penalty0 (20):\penalty0 19750--19760, 2022.

\bibitem[Zhou et~al.(2023)Zhou, Huang, Yuan, Zou, Xie, and Yang]{zhou2023}
Yunjiao Zhou, He~Huang, Shenghai Yuan, Han Zou, Lihua Xie, and Jianfei Yang.
\newblock Metafi++: Wifi-enabled transformer-based human pose estimation for metaverse avatar simulation.
\newblock \emph{IEEE Internet of Things Journal}, 10\penalty0 (16):\penalty0 14128--14136, 2023.

\bibitem[Zhou et~al.(2024)Zhou, Yang, Huang, and Xie]{zhou2024}
Yunjiao Zhou, Jianfei Yang, He~Huang, and Lihua Xie.
\newblock Adapose: Toward cross-site device-free human pose estimation with commodity wifi.
\newblock \emph{IEEE Internet of Things Journal}, 11\penalty0 (24):\penalty0 40255--40267, 2024.

\end{thebibliography}
